\theoremstyle{plain}
\newtheorem{theorem}{Theorem}[section]
\newtheorem{proposition}[theorem]{Proposition}
\newtheorem{lemma}[theorem]{Lemma}
\newtheorem{corollary}[theorem]{Corollary}
\theoremstyle{definition}
\newtheorem{definition}[theorem]{Definition}
\theoremstyle{remark}
\newtheorem*{remark}{Remark}
\newtheorem{example}[theorem]{Example}
\icmltitlerunning{The Illusion of State in State-Space Models}
\patchcmd{\NAT@test}{\else \NAT@nm}{\else \NAT@nmfmt{\NAT@nm}}{}{}
\DeclareRobustCommand\citepos
   \let\NAT@nmfmt\NAT@posfmt
\let\NAT@ctype\z@\NAT@partrue
\let\NAT@orig@nmfmt\NAT@nmfmt
\def\NAT@posfmt#1{\NAT@orig@nmfmt{#1's}}
\DeclarePairedDelimiter\abs{\lvert}{\rvert}%
\DeclarePairedDelimiter\norm{\lVert}{\rVert}%
\let\oldabs\abs
\def\abs{\@ifstar{\oldabs}{\oldabs*}}
\let\oldnorm\norm
\def\norm{\@ifstar{\oldnorm}{\oldnorm*}}
\def\Snospace~{\S{}}
\newcommand{\AC}{\mathsf{AC}}
\newcommand{\NC}{\mathsf{NC}}
\newcommand{\TC}{\mathsf{TC}}
\newcommand{\poly}{\mathrm{poly}}
\newcommand{\sgn}{\mathrm{sgn}}
\renewcommand{\L}{\mathsf{L}}
\renewcommand{\P}{\mathsf{P}}
\newcommand{\D}{\mathbb{D}}
\newcommand{\R}{\mathbb{R}}
\newcommand{\Z}{\mathbb{Z}}
\DeclareMathOperator{\diag}{diag}
\definecolor{move1}{rgb}{0, 0.3, 0.5}
\definecolor{move2}{rgb}{0.2, 0.5, 0.7}
\definecolor{move3}{rgb}{0.4, 0.7, 0.9}
\definecolor{move1red}{rgb}{0.5, 0.3, 0}
\definecolor{move2red}{rgb}{0.7, 0.5, 0.2}
\definecolor{move3red}{rgb}{0.9, 0.7, 0.4}
\definecolor{movegray}{rgb}{0.3, 0.3, 0.3}
\newcommand\m[1]{\textsf{#1}}
\DeclareFixedFont{\ttb}{T1}{txtt}{bx}{n}{12} 
\DeclareFixedFont{\ttm}{T1}{txtt}{m}{n}{12}  
\definecolor{deepblue}{rgb}{0,0,0.5}
\definecolor{deepred}{rgb}{0.6,0,0}
\definecolor{deepgreen}{rgb}{0,0.5,0}
\definecolor{deepgray}{rgb}{0.5, 0.5, 0.5}
\newcommand{\cast}{\mathsf{cast}}
\colorlet{new}{blue}
\definecolor{lightred}{RGB}{255, 204, 203}
\definecolor{lightpurple}{RGB}{203, 195, 227}
\newcommand{\barA}{\bar{\mathbf A}}
\newcommand{\barB}{\bar{\mathbf B}}
\newcommand{\bigcomp}{%
  \DOTSB
  \mathop{\vphantom{\sum}\mathpalette\bigcomp@\relax}%
  \slimits@
}
\newcommand{\bigcomp@}[2]{%
  \begingroup\m@th
  \sbox\z@{$#1\sum$}%
  \setlength{\unitlength}{0.9\dimexpr\ht\z@+\dp\z@}%
  \vcenter{\hbox{%
    \begin{picture}(1,1)
    \bigcomp@linethickness{#1}
    \put(0.5,0.5){\circle{1}}
    \end{picture}%
  }}%
  \endgroup
}
\newcommand{\bigcomp@linethickness}[1]{%
  \linethickness{%
      \ifx#1\displaystyle 2\fontdimen8\textfont\else
      \ifx#1\textstyle 1.65\fontdimen8\textfont\else
      \ifx#1\scriptstyle 1.65\fontdimen8\scriptfont\else
      1.65\fontdimen8\scriptscriptfont\fi\fi\fi 3
  }%
}
\newcommand\softplus{\mathrm{softplus}}
\begin{document}

\twocolumn[
\icmltitle{The Illusion of State in State-Space Models}




\begin{icmlauthorlist}
\icmlauthor{William Merrill}{nyu}
\icmlauthor{Jackson Petty}{nyu}
\icmlauthor{Ashish Sabharwal}{ai2}
\end{icmlauthorlist}

\icmlaffiliation{nyu}{New York University}
\icmlaffiliation{ai2}{Allen Institute for AI}

\icmlcorrespondingauthor{William Merrill}{willm@nyu.edu}
\icmlcorrespondingauthor{Jackson Petty}{petty@nyu.edu}
\icmlcorrespondingauthor{Ashish Sabharwal}{ashishs@allenai.org}

\icmlkeywords{Machine Learning, ICML}

\vskip 0.3in
]



\printAffiliationsAndNotice{}  


\begin{abstract}
    State-space models (SSMs) have emerged as a potential alternative to transformers.
    One theoretical weakness of transformers is that they cannot express certain kinds of sequential computation and state tracking \citep{merrill-sabharwal-2023-parallelism}, which SSMs are explicitly designed to address via their close architectural similarity to recurrent neural networks. \emph{But do SSMs truly have an advantage (over transformers) in expressive power for state tracking?} Surprisingly, the answer is no. Our analysis reveals that the expressive power of S4, Mamba, and related SSMs is limited very similarly to transformers (within $\TC^0$), meaning
    these SSMs cannot solve simple state-tracking problems like permutation composition and consequently are provably unable to accurately track chess moves with certain notation, evaluate code, or track entities in a long narrative. To supplement our formal analysis, we report experiments showing that S4 and Mamba indeed struggle with state tracking. Thus, despite their recurrent formulation, the ``state'' in common SSMs is an illusion: S4, Mamba, and related models have similar expressiveness limitations to non-recurrent models like transformers, which may fundamentally limit their ability to solve real-world state-tracking problems. Moreover, we show that only a minimal change allows SSMs to express and learn state tracking, motivating the development of new, more expressive SSM architectures.
\end{abstract}

\section{Introduction}

\begin{figure}[t]
    \centering
    \resizebox{.38\textwidth}{!}{
        \chessboard[
            setpieces={qa8, qb8, rc8, qd8, qe8, Pa2, Pb2, Ka1, kh1},
            addpgf={
                \tikz[overlay]\draw[move1red,line width=0.1em,->](c8) to[out=30,in=30] (c6);
                \tikz[overlay]\draw[move2red,line width=0.1em,->](c6)--(a6);
                \tikz[overlay]\draw[move3red,line width=0.1em,->](a6) to[out=150,in=210] (a8);
                \tikz[overlay]\draw[move1,line width=0.1em,->](a8)--(a7);
                \tikz[overlay]\draw[move2,line width=0.1em,->](a7)--(c7);
                \tikz[overlay]\draw[move3,line width=0.1em,->](c7)--(c8);
                \tikz[overlay]\draw[movegray,line width=0.1em,<->](a1)--(b1);
            },
        ]
    }

    \noindent\fbox{\parbox{0.95\columnwidth}{%
        \small
        \pyth{x = [0, 0, 1, 0, 0]}\\
        \pyth{x[1], x[3] = x[3], x[1]  # Swap 1, 3}
    }}
    \vspace{0.5em}

    \noindent\fbox{\parbox{0.95\columnwidth}{%
        \emph{Alice, Bob, Carl, Dan, and Emma each have a coin. All are dimes except Carl's.
        Alice and Carl trade coins.}
    }}
    
    \caption{
    We prove that SSMs, like transformers, cannot solve inherently sequential problems like permutation composition ($S_5$), which lies at the heart of state-tracking problems like tracking chess moves in source-target notation  (see~\Cref{sec:chess}), evaluating Python code, or entity tracking. Thus, SSMs cannot, in general, solve these problems either.
    \begin{tabular}{c} \\
         \hspace{0.7em}\textbf{Code:} \url{http://jpetty.org/ssm-illusion}
    \end{tabular}
    }
    \label{fig:s5-chess-code}
\end{figure}

Recent theoretical work has shown that transformer architecture based models are incapable of expressing inherently sequential computation \citep{merrill-sabharwal-2023-parallelism}. These results reveal a surprising limitation of transformers: they cannot express simple kinds of \emph{state tracking} problems, such as composing sequences of permutations, which even simple recurrent neural networks (RNNs) can naturally express.
In a different line of work, state space model (SSM) architectures \citep{gu2021combining,gu2022efficiently,fu2023hungry,gu2023mamba,wang2024mambabyte} have been introduced as an alternative to transformers, with the goal of achieving RNN-like expressive power for handling problems that are naturally stateful and sequential \citep{gu2021combining,gu2022blog}.
\emph{But does the seemingly stateful design of SSMs truly enable them to solve sequential and state-tracking problems that transformers cannot?}
If so, this would be a promising property of SSMs because state tracking is at the heart of large language model (LLM) capabilities such as tracking entities in a narrative \citep{heim1983file,kim-schuster-2023-entity}, playing chess under certain notation\footnote{The hardness of chess state tracking holds with (source, target) notation, but standard notation may make state tracking easier.}, or evaluating code. This would motivate further research on SSM architectures and their deployment in the next generation of LLMs.


In this work, we show that the apparent stateful design of SSMs is an \emph{illusion} as far as their expressive power is concerned. In contrast to the suggestion by \citet{gu2021combining,gu2022blog} (and, perhaps, a broader belief in the community) that SSMs have expressive power for state tracking similar to RNNs, we prove theoretically that linear and Mamba-style SSMs, like transformers, cannot express inherently sequential problems, including state-tracking problems like composing permutations that RNNs can easily express. Further, our experiments confirm this prediction: both transformers and these SSMs cannot learn to compose permutations with a fixed number of layers, whereas RNNs can compose permutations with just a single layer.
Our results imply that arguments that current SSMs have an advantage over transformers due to being ``more recurrent'' or capable of tracking state are misguided. In fact, the SSM architectures we consider are just as theoretically unequipped for state tracking and recurrent computation as transformers are.

We first establish the theoretical weakness of linear SSMs and near generalizations by proving they are in the complexity class $\L$-uniform $\TC^0$, which has been previously shown for transformers \citep{merrill-sabharwal-2023-parallelism}. This implies these SSMs cannot solve inherently sequential problems (formally, problems that are $\NC^1$-hard), including state-tracking problems like permutation composition \citep{liu2023transformers}. Permutation composition is a fundamental problem at the heart of many real-world state-tracking problems such as playing chess, evaluating code, or tracking entities in a narrative (\Cref{fig:s5-chess-code}), implying solutions to these problems, too, cannot be expressed by SSMs, at least in the worst case.

At first glance, our results may appear to contradict \citet{gu2021combining}'s claim that linear SSMs can simulate general recurrent models, which can express permutation composition. But the contradiction is resolved by a difference in assumptions: \citet{gu2021combining} relied on \emph{infinite depth}
(number of layers) to show that SSMs could simulate RNNs. We, on the other hand, analyze the realistic setting with a bounded number of layers, under which we find that SSMs cannot simulate the recurrent state of an RNN and, in fact, suffer from similar limitations as transformers for state tracking.

Empirically, we find that S4 \citep{gu2022efficiently} and S6 \citep{gu2023mamba} SSMs, as well as transformers, do \emph{not} learn to solve the permutation composition state-tracking problem with a fixed number of layers, while simple RNNs can do so with just one layer. This provides empirical support for our theoretical separation in expressive power for state tracking between SSMs and true recurrent models. We also find that both transformers and SSMs struggle compared to RNNs on state-tracking problems less complex than permutation composition where it is not known whether they can express a solution.
Thus, in practice, SSMs may struggle not just on the hardest state-tracking problems like permutation composition but also on easier variants.

\begin{figure}
    \centering
    \begin{tikzpicture}
    \node[
          above,ellipse,draw,
          minimum height=8em,
          minimum width=18em,
          fill=lightred,
         ] (nc1) {};
    \node[
          above,ellipse,draw,
          minimum height=6em,
          minimum width=14em,
          fill=lightpurple,
         ] (tc0) {};

    \node[below left=0.25cm and 0.7cm of nc1.north,font=\footnotesize] {\textit{Chess}};
    \node[below left=0.6cm and 2.0cm of nc1.north,font=\footnotesize] {$S_5$};
    \node[below right=0.25cm and 0.7cm of nc1.north,font=\footnotesize] {\textit{Code}};
    \node[below right=0.6cm and 1.6cm of nc1.north,font=\footnotesize] {\textit{Entities}};

    \node[
          left = 1.2cm of tc0.center,
          font=\footnotesize,
          anchor=center,
         ] {SSMs};
    \node[
          right = 1cm of tc0.center,
          font=\footnotesize,
          anchor=center,
         ] {Transformers};

    \path
        (tc0.north) node[below] {$\TC^0$}
        (nc1.north) node[below] {$\NC^1$};
    \end{tikzpicture}
    \caption{Complexity hierarchy within $\NC^1$. Transformers can only recognize languages within $\TC^0$ \citep{merrill-sabharwal-2023-parallelism}, and we show the same for SSMs (\Cref{thm:non-gated,thm:diagonal}). Thus, both architectures cannot express the ``hard state tracking'' captured by $\NC^1$-complete problems like $S_5$, which \emph{can} be straightforwardly expressed by RNNs. The figure assumes the widely held conjecture $\TC^0 \neq \NC^1$.}
    \label{fig:hierarchy}
\end{figure}

Finally, we consider a minimal extension of a linear SSM which makes the transition matrix input dependent, similar to Liquid S4 \cite{hasani2023liquid}. We show that this extension has sufficient expressive power for state tracking and permutation composition. Empirically, we show that our implementation of this extension learns to solve permutation composition with a single layer, just like an RNN, while being similarly parallelizable to other SSMs.
It is an open question whether such SSM architectures with greater expressivity for state tracking are practically viable for large-scale language modeling.

\section{Background}

We first present the SSM architectures we will analyze (\Cref{sec:ssms}).
Our analysis of the state tracking capabilities of SSMs borrows deeply from the circuit complexity and algebraic formal language theory literature. We thus review how circuit complexity can be used to analyze the power of neural networks (\Cref{sec:circuits}) and how state-tracking problems can be captured algebraically and analyzed within the circuit complexity framework (\Cref{sec:state-tracking}).

\subsection{Architecture of State-Space Models} \label{sec:ssms}

SSMs are a neural network architecture for processing sequences similar in design to RNNs or linear dynamical systems.
SSMs have been suggested to have two potential advantages compared to transformers owing to their recurrent formulation: faster inference and, possibly, the ability to better express inherently sequential or stateful problems \citep{gu2021combining,gu2022blog}.
Several architectural variants of SSMs have been proposed, including S4 \citep{gu2022efficiently} and Mamba \citep{gu2023mamba}.
Recently, SSMs have been shown to achieve strong empirical performance compared to transformers in certain settings, particularly those involving a long context \citep{gu2023mamba,wang2024mambabyte}.

SSMs consist of \emph{SSM layers}, which can be thought of as simplified RNN layers. We define a \emph{generalized linear SSM layer} that encapsulates both S4 \citep{gu2022efficiently} and the S6 layer used by Mamba \citep{gu2023mamba} as special cases.

\begin{definition}[Generalized linear SSM layer]
    \label{def:generalized-linear-ssm}
    Given a sequence\footnote{In practice, an SSM is often applied elementwise ($k=1$) on each feature $1, \ldots, m$ of hidden state $\mathbf x_1, \ldots, \mathbf x_n \in \mathbb R^m$.} $\mathbf x_1, \ldots, \mathbf x_n \in \mathbb R^k$, the \emph{recurrent form} of a linear SSM layer defines a new sequence of states $\mathbf h_1, \ldots, \mathbf h_n \in \mathbb R^d$ using projections $\barA_i \in \mathbb R^{d \times d}$ and $\barB_i \in \mathbb R^{d \times k}$, which can themselves depend on $\mathbf x_i$. For each $1 \leq i \leq n$,
    \begin{equation}
        \label{eqn:recurrent}
        \mathbf h_i = \barA_i \mathbf h_{i-1} + \barB_i \mathbf x_i .
        \tag{Recurrent form}
    \end{equation}
    The \emph{convolutional form} of the SSM layer defines the same\footnote{The two forms express the same function over $\mathbb R$ or any other distributive datatype. Over floating points (\Cref{sec:datatype}), they are not guaranteed to be the same, but we must assume the error is negligible for them to be well-defined and usable in practice.} $\mathbf h_1, \ldots, \mathbf h_n$ computed differently as a summation:\footnote{We define the cumulative product to unroll from greatest to least, e.g., $\prod_1^3 \mathbf A_i = \mathbf A_3 \mathbf A_2 \mathbf A_1$. The order is important due to the non-commutativity of matrix multiplication.}
    \begin{equation}
        \label{eqn:convolutional}
        \mathbf h_i = \sum_{j=1}^i \left( \prod_{k=j+1}^i \barA_k \right) \barB_j \mathbf x_j .
        \tag{Convolutional form}
    \end{equation}
    The layer outputs $\mathbf y_i = \mathbf C_i \mathbf h_i + \mathbf D_i \mathbf x_i \in \mathbb R^k$, where $\mathbf C_i \in \mathbb R^{k \times d}$ and $\mathbf D_i \in \mathbb R^{k \times k}$ depend on $\mathbf x_i$.
\end{definition}

Two common cases of this layer are when $\barA_i$ does not depend on the input (``non-gated''; \Cref{sec:non-gated}) and when $\barA_i$ is diagonal (\Cref{sec:diagonal}). In both of these cases, we will show that the SSM can be simulated in $\TC^0$.

A \textbf{generalized linear SSM} is made up of multiple such layers, with a linear projection and a non-linearity applied after every layer \citep{rush2022s4}. Layer-norm can also be applied, either before or after the layer.

\paragraph{Practical Details.}
In S4 and related SSMs, \Cref{def:generalized-linear-ssm} is applied elementwise ($k=1$) across all $m$ elements of the previous layer output \citep{gu2022efficiently}. In practice, the weight matrix initialization is crucial for training.
Our expressivity results (\Cref{thm:non-gated,thm:diagonal}) apply for any generalized linear SSM (including S4 and S6), independent of initialization.
In contrast to S4 and S6, H3 \citep{fu2023hungry} does not meet \Cref{def:generalized-linear-ssm} because the context is not represented by a single vector. Rather, it resembles a transformer with SSM components.

\subsection{Numeric Datatype}
\label{sec:datatype}

Circuit-complexity analysis of neural networks depends to some degree on low-level details about arithmetic and the underlying datatype $\D$ used in the network's computation graph. We can think of $\D$ as parameterized by the number of bits available to represent a number in $\D$. For instance, non-negative integers in $[0, 2^p]$ use $p$ bits, signed integers in $[-2^p, 2^p]$ use $p+1$ bits, FP16 uses 16 bits, etc.

Our main results (\Cref{thm:non-gated,thm:diagonal}) will go through for any datatype $\D$ for which the following operations are efficiently parallel-computable (i.e., are in the complexity class $\L$-uniform $\TC^0$, to be defined shortly in \cref{sec:circuits}):
\begin{compactenum}
    \item Iterated addition, i.e., summing $n$ numbers in $\D$
    \item Iterated product, i.e., multiplying $n$ numbers in $\D$
    \item Matrix powering, i.e., computing the $n$-th power of a fixed-size $d \times d$ matrix over $\D$
\end{compactenum}

When $\D$ is any finite-precision datatype, i.e., has a fixed number of bits available (e.g., 16 or 64), then these operations are easily seen to be in $\L$-uniform $\TC^0$. As \citet{merrill2023logic} argue, however, finite-precision datatypes severely limit the expressivity of neural architectures from a formal perspective (e.g., finite-precision transformers cannot represent uniform attention), motivating the use of parameterized datatypes that can (approximately) represent any number with a sufficiently large parameter. Interestingly, when $\D$ is the datatype of $n$-bit integers, all of the above operations are known to be in $\L$-uniform $\TC^0$ \cite{hesse2001division,mereghetti2000threshold}. Realistically, however, neural model implementations use floating point numbers with much fewer than $n$ bits. Following \citet{merrill2023logic}, we use the \textbf{log-precision floating point} model, i.e., $c \log n$ bit floats where $c$ is some fixed constant (see \Cref{app:arithmetic} for a formal definition). \citet{merrill-sabharwal-2023-parallelism} showed that iterated addition over log-precision floats is in $\L$-uniform $\TC^0$. We extend the arguments of \citet{hesse2001division} and \citet{mereghetti2000threshold} to show that iterated product and matrix powering over log-precision floats are also in $\L$-uniform $\TC^0$ (see \Cref{app:arithmetic}).

\subsection{Limits of Transformers via Circuit Complexity} \label{sec:circuits}

A line of recent work has used circuit complexity and logic formalisms to identify expressivity limitations of transformers on reasoning problems \cite{angluin-2023-BRASP,merrill-sabharwal-2023-parallelism,liu2023transformers,chiang2023tighter,merrill2023logic,hao2022ac0}; see \citealp{strobl2023transformers} for a survey. In particular, \citet{merrill-sabharwal-2023-parallelism} showed transformers can only solve problems in the complexity class $\TC^0$, which is the set of problems that can be recognized by constant-depth, polynomial-size threshold circuit families. Such circuits, in addition to having standard AND, OR, and NOT gates (of arbitrary fan-in), can also use threshold gates that output $1$ iff at least $k$ of the inputs are $1$, where $k$ is a parameter of the gate. Informally, $\TC^0$ can be thought of as the class of problems that can be solved with extremely parallel (constant-depth) computation.\footnote{We use $\TC^0$ to mean $\L$-uniform $\TC^0$, meaning the circuit family is constructible by a Turing machine that runs in space logarithmic in the size of the input \citep[cf.][]{merrill-sabharwal-2023-parallelism,strobl2023transformers}.
We believe our results could be extended from $\L$-uniform $\TC^0$ to $\mathsf{DLOGTIME}$-uniform $\TC^0$ using techniques similar to \citet{merrill2023logic} for composing $\TC^0$ circuits in a way that preserves $\mathsf{DLOGTIME}$ uniformity.
}

Problems outside $\TC^0$, corresponding to problems that are inherently sequential and thus cannot be parallelized, cannot be solved by transformers. No problems in polynomial time are known unconditionally to be outside $\TC^0$, but unless the widely held conjecture that $\TC^0 \neq \NC^1$ is false, many simple $\NC^1$-hard problems are outside $\TC^0$. In particular, this includes simulating finite automata ($\NC^1$-complete), evaluating boolean formulas ($\NC^1$-complete), determining graph connectivity ($\L$-complete), and solving linear equations ($\P$-complete). These problems have already been shown to be inexpressible by transformers \citep{merrill-sabharwal-2023-parallelism}. By showing that SSMs can be simulated in $\TC^0$, we will establish that they also cannot be solved by SSMs.

\section{State Tracking} \label{sec:state-tracking}

Informally, a state-tracking problem is a problem where the text specifies some sequence of updates to the state of the world, and the goal of the problem is to determine what the world state is after the updates have been applied in sequence.
The circuit complexity view on the power of neural networks can be combined with other insights from algebraic formal language theory to analyze the kinds of state tracking that SSMs can express.
In particular, this theory reveals which kinds of state-tracking problems are (likely) not in $\TC^0$. This will, in turn, allow us to find examples of hard state tracking that models like SSMs cannot express.

\subsection{State Tracking as a Monoid Word Problem} \label{sec:word-problems}

 From the perspective of algebraic formal language theory, state tracking over a finite world can be captured as a \emph{word problem} on a \emph{finite monoid} \citep{liu2023transformers}.\footnote{We consider finite monoids for simplicity, but the approach may be extendable to infinite (e.g., finitely generated) monoids.}
Different updates to the world become different elements in the monoid, and resolving the final world state after all the updates have been applied is equivalent to computing the product of a sequence of elements (also called a ``word'').

\begin{definition}[Word problem]
    Let $M$ be a finite set, and $(M,\cdot)$ a finite monoid (i.e., $M$ with identity and associative multiplication). The word problem for $M$ is 
    to reduce sequences in $M^*$ under multiplication; that is, send $m_0 m_1 \cdots m_k$ to $m_0 \cdot m_1 \cdot \ldots \cdot m_k \in M$. Solving the word problem requires reducing sequences of arbitrary length.
\end{definition}

\begin{example} \label{ex:parity}
Consider the monoid $\{0, 1\}$ where $\cdot$ is addition modulo 2. The word problem is to compute the parity of a string, e.g., $0011 \mapsto 0$. From a state-tracking perspective, this monoid captures a world with a single light switch. Identity $0$ corresponds to no action, and $1$ flips the switch.
\end{example}



Modeling state tracking with word problems lets us draw connections between circuit complexity and algebra to understand which word problems are hard to solve. \citet{krohn1965algebraic} established that not all word problems are created equal: some, like \Cref{ex:parity}, are in $\TC^0$, while others are $\NC^1$-complete, requiring recurrent processing to solve \citep{immerman1989complexity,barrington1989bounded}. Because we will show SSMs can be simulated in $\TC^0$, it follows that $\NC^1$-complete state-tracking problems cannot be expressed by SSMs (cf.~\Cref{fig:hierarchy}).

Whether or not a word problem is $\NC^1$-complete depends on the algebraic structure of the underlying monoid. \citet{barrington1989bounded} showed that the word problem of every finite non-solvable\footnote{We focus on word problems on groups, which are monoids with inverses. Formally, a group $G$ is solvable exactly when there is a series of subgroups $1 = G_0 < G_1 < \cdots < G_k = G$ such that $G_{i-1}$ is normal in $G_i$ and $G_i/G_{i-1}$ is abelian.} group is  $\NC^1$-complete. That non-solvable groups have $\NC^1$-complete word problems is notable because of the ubiquity with which non-solvable groups show up in tasks involving state tracking. The canonical example of an $\NC^1$-complete word problem is that of $S_5$, the symmetric group on five elements that encodes the permutations over five objects. As an immediate instantiation of this, consider a document describing a sequence of transpositions: \emph{``swap ball 1 and 3, swap ball 3 and 5, swap ball 4 and 2, ...''}.
Being able to answer the question \emph{``where does ball 5 end up?''} for all possible swap sequences requires solving the $S_5$ word problem.\footnote{W.l.o.g., any permutation can be factored into a sequence of transpositions, or swaps.} 
Beyond permutations, \Cref{fig:s5-chess-code} shows how many natural state-tracking problems like tracking chess moves, evaluating code, or tracking entities also encode the structure of $S_5$, meaning these state-tracking problems also cannot be expressed by a model in $\TC^0$. Rather, in order to solve these problems, the depth of the model would have to be expanded to accommodate longer inputs.

Although the $S_5$ word problem is canonical, in this paper we will consider the word problem on a closely related group $A_5$: the \emph{alternating} group on five elements. We do this for simplicity: $A_5$ is a subgroup of $S_5$ containing only even permutations, and is the smallest non-solvable subgroup. We will compare the word problem on $A_5$ to two other baseline groups: $A_4 \times \mathbb{Z}_5$, a non-abelian but solvable group; and $\mathbb{Z}_{60}$, an abelian group encoding mod-$60$ addition. We choose these groups as points of comparison because they all have $60$ distinct elements, meaning that the difficulty in learning their word problems will come only from the complexity of learning the group multiplication operation.

\subsection{Encoding $S_5$ in Chess State Tracking} \label{sec:chess}


\Cref{fig:s5-chess-code} already gives some intuition into how state-tracking problems encode $S_5$. Out of these examples, the most intricated case is chess. We now give a proper reduction from $S_5$ to tracking chess moves, showing formally that not just $S_5$, but chess state tracking as well, is $\NC^1$-complete.
We define the chess state-tracking problem as follows:

\begin{compactitem}
    \item \textbf{Input:} A \textbf{chessboard state} and \textbf{sequence of chess moves}, where each move is written in UCI notation as a tuple (source square, target square). This differs from the standard SAN notation that represents other information like piece type \citep{Toshniwal2021ChessAA}.
    \item \textbf{Output:} The resulting board state after starting in the initial board state and applying the sequence of moves one after another, ignoring draws. If any move is illegal given the previous board state, a null state is returned.
\end{compactitem} 

We show that $S_5$ can be reduced to chess state tracking, establishing its $\NC^1$-completeness:

\begin{proposition}\label{prop:chess}
    $S_5$ can be reduced to chess state tracking in UCI notation via $\NC^0$ reductions.
\end{proposition}

\begin{proof}
    Without loss of generality, we consider the variant of $S_5$ where the output is true if and only if the original first element returns to the first position after the given sequence of permutations has been applied.
    
    The idea, as illustrated in \Cref{fig:s5-chess-code}, is to map each element of $S_5$ to a fixed sequence of chess moves that permutes five pieces accordingly on the chessboard.
    Given an instance of the $S_5$ word problem, we will construct an initial board state and a sequence of moves such that the final chessboard state encodes the output of that $S_5$ problem instance.

    Let $M$ denote the set of chess moves in the UCI, i.e., (source square, target square), notation.
    
    \noindent \textbf{Initial Board State.} We construct a chessboard similar to \Cref{fig:s5-chess-code} but with a black rook at $\m{a8}$ and black queens at $\m{b8}$ to $\m{e8}$.

    \noindent \textbf{Chess Move Sequence.}
    We then construct a finite function $f : S_5 \to M^*$ that encodes a permutation $\pi$ as a sequence of chess moves. We first factor each permutation $\pi$ to a sequence of transpositions $\tau_1(\pi) \cdots \tau_{m_\pi}(\pi)$.
    Each transposition $\tau \in T$ can in turn be expressed as a sequence of chess moves analogously to \Cref{fig:s5-chess-code}. For example, transposing items 1 and 3 can be expressed as the move sequence: (\m{a8}, \m{a7}), (\m{a1}, \m{b1}), (\m{c8}, \m{c6}), (\m{b1}, \m{a1}), (\m{a7}, \m{c7}), (\m{a1}, \m{b1}), (\m{c6}, \m{a6}), (\m{b1}, \m{a1}), (\m{c7}, \m{c8}), (\m{a1}, \m{b1}), (\m{a6}, \m{a8}), (\m{b1}, \m{a1}), which has the crucial property that it transposes \m{a8} with \m{c8}. We denote the mapping from transpositions to chess move sequences as $f : T \to M^*$. Putting it all together, we have
    \begin{equation*}
        f(\pi) = \bigcomp_{j=1}^{m_\pi} f(\tau_j(\pi)) .
    \end{equation*}
    To reduce a sequence of permutations $w \in S_5^*$, we let
    \begin{equation*}
        f(w) = \bigcomp_{i=1}^{n} f(w_i) .
    \end{equation*}

    \noindent \textbf{Putting It All Together.} We call our oracle for chess state tracking with the constructed initial board state and $f(w)$ as the sequence of chess moves. By construction, we can then return true if and only if the rook is at $\m{a8}$. The reduction can be implemented in $\NC^0$ because it is a simple elementwise mapping of the input tokens, and decoding from the output chessboard is a finite table lookup.
\end{proof}

As a fun aside, we note that the chess board constructed in the above proof is reachable in a standard chess game. The chess sequences encoding permutation sequences are all valid in the game of chess, except that they ignore the fact that repeated board states in chess technically lead to a draw.

Since $S_5$ is $\NC^1$-complete under $\AC^0$ reductions and $\NC^0 \subseteq \AC^0$, we have:

\begin{corollary}
    The chess state-tracking problem is $\NC^1$-complete under $\AC^0$ reductions.
\end{corollary}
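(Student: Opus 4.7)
The plan is to prove both $\NC^1$-hardness and containment in $\NC^1$ under $\AC^0$ reductions, which together yield $\NC^1$-completeness.

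First, for \textbf{hardness}, the argument is essentially bookkeeping on top of \Cref{thm:chess}. That proposition provides an $\NC^0$ reduction from $S_5$ to chess state tracking, and $\NC^0 \subseteq \AC^0$. Since $S_5$ is $\NC^1$-hard under $\AC^0$ reductions \citep{barrington1989bounded} and $\AC^0$ reductions are closed under composition, chess state tracking inherits $\NC^1$-hardness under $\AC^0$ reductions.

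Second, for \textbf{containment in $\NC^1$}, I would argue that chess state tracking is a word problem on a finite monoid and appeal to the classical fact that such word problems lie in $\NC^1$. The key observation is that a chessboard configuration lives in a bounded set $Q$ of size at most $13^{64}$ (each of 64 squares is empty or contains one of 12 piece types), augmented with a distinguished null state that absorbs illegal moves. Each input move $(s, t)$ induces a function $Q \to Q$, and these functions generate a finite transformation monoid $M$ under composition. The chess problem therefore amounts to (i) mapping each input move to its corresponding monoid element, which is a constant-size lookup per token and hence in $\NC^0$, (ii) computing the iterated product $m_1 \cdot m_2 \cdots m_n$ in $M$, and (iii) applying the resulting function to the given initial board state. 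Step (ii) is exactly the word problem for a fixed finite monoid, which can be solved in $\NC^1$ by a balanced binary tree of depth $\O(\log n)$ whose internal gates are constant-size circuits implementing the multiplication table of $M$ (equivalently, this is the standard DFA-simulation-in-$\NC^1$ construction).

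The main potential obstacle is just the interface between the raw input format (a board description plus a sequence of source-target pairs) and the monoid encoding, but since each move has bounded description length and the transition function of chess on a single move is a fixed finite computation, the encoding step is uniformly in $\AC^0$ (indeed $\NC^0$). Once this preprocessing is handled, both directions plug directly into well-established machinery, so no genuinely new combinatorial work is needed beyond what \Cref{thm:chess} already provides.
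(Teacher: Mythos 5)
Your proof is correct and follows the same route as the paper for the hardness direction: compose the $\NC^0$ reduction from $S_5$ (\Cref{thm:chess}) with Barrington's $\AC^0$-hardness of $S_5$, using $\NC^0 \subseteq \AC^0$ and closure of $\AC^0$ reductions under composition. The paper itself presents the corollary as an immediate consequence of \Cref{thm:chess} and Barrington's theorem and does not spell out the containment direction, implicitly treating chess state tracking as an instance of finite automaton simulation (which the paper elsewhere notes is in $\NC^1$). You correctly recognized that completeness requires both directions and supplied the containment argument explicitly: the board configuration space (augmented with the absorbing null state, and any bookkeeping bits for castling rights, en passant, side to move, etc.) is a fixed finite set $Q$, each source-target move induces a map $Q \to Q$, and the resulting iterated composition is a finite-monoid word problem solvable by the standard balanced-tree circuit in $\NC^1$. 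The constants involved are astronomical but input-length-independent, which is all that matters. This is a clean and complete treatment; it does not diverge from the paper's argument so much as fill in a step the paper regards as routine.
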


Theorem 3.2 of \citet{feng2023towards} uses a similar reduction to prove formula evaluation is $\NC^1$-complete.
Reductions can be constructed for evaluating Python or tracking entities in a dialog, as suggested by \Cref{fig:s5-chess-code}.
As for chess, the task formatting for entity tracking affects its hardness. For instance, the formatting used by \citet{kim-schuster-2023-entity} in their Figure 1 is not $\NC^1$-complete, whereas the variant shown in our Figure 1 is. This underscores the value of theory for constructing examples of hard state tracking.

\section{SSMs Can be Simulated in $\TC^0$}

In this section, we show that the convolutional form of common variants of SSM can be simulated in $\TC^0$. Assuming the convolutional form of the model computes the same function as the recurrent form, this implies such SSMs cannot solve inherently sequential problems, despite their appearance of recurrence and statefulness. We first show containment in $\TC^0$ for non-gated SSMs (\Cref{thm:non-gated}), and then show the same holds for diagonal SSMs (\Cref{thm:diagonal}).

\subsection{Conditions for Linear SSMs in $\TC^0$}

Before characterizing specific SSM architectures, we first show that the complexity of computing transition matrix products essentially determines the complexity of simulating an SSM with a circuit family.

\begin{lemma} \label{lem:general}
    Let $M$ be a log-precision generalized linear SSM. Then there exists an $\L$-uniform $\TC^0$ circuit family that computes $M$'s convolutional form if:
    \begin{compactenum}
        \item For any integer interval $[j, k]$, the matrix product $\prod_{i = j}^k \barA_i$ can be computed in $\L$-uniform $\TC^0$ as a function of $\barA_j, \ldots, \barA_k$ (to $c \log n$ precision for any $c > 0$).
        \item For $1 \leq i \leq n$, $\barA_i, \barB_i, \mathbf C_i,$ and $\mathbf D_i$ can be computed in $\L$-uniform $\TC^0$ as a function of $\mathbf x_i$.
    \end{compactenum}
\end{lemma}

\begin{proof}
    Following the proof structure of \citet{merrill-sabharwal-2023-parallelism}, we describe how to construct a log-space bounded Turing machine $T_M$ that, given $\mathbf x_1, \ldots, \mathbf x_n$ as input, prints a circuit that simulates $M$ on this input. We first note that for all processing done before or after an SSM layer (projection, non-linearity, layer norm, etc.), $T_M$ can follow known simulations of such operations for transformers~\citep{merrill-sabharwal-2023-parallelism,merrill-sabharwal-2024-cot} to output a $\TC^0$ circuit simulating this processing. We thus focus on simulating an individual SSM layer.
    
    Recall from \cref{def:generalized-linear-ssm} that $M$'s convolutional form requires computing $\mathbf h_i = \sum_{j=1}^i \left( \prod_{k=j+1}^i \barA_k \right) \barB_j \mathbf x_j$ and $\mathbf y_i = \mathbf C_i \mathbf h_i + \mathbf D_i \mathbf x_i$. By the second precondition, $T_M$ can print a $\TC^0$ circuit that computes all matrices involved here. Further, by the first precondition, $T_M$ can also print a $\TC^0$ circuit that computes the innermost product in the computation of each hidden state $\mathbf h_i$, namely $\prod_{k=j+1}^i \barA_k$. It can now print a $\TC^0$ circuit to multiply the resulting product\footnote{Let $c \log n$ be the SSM's precision. We compute $\prod_k \barA_k$ to $c' \log n$ precision for a large enough $c'$ (similar to the proof of \cref{cor:iterated-float-product}) such that the full product $\left( \prod_k \barA_k \right) \barB_j \mathbf x_j$ is correct to at least $c \log n$ bits, as technically required by \Cref{def:flattening}.} with $\barB_j$ and $\mathbf x_j$, and then print a circuit to compute an iterated sum over the $i$ resulting vectors to compute $\mathbf h_i$ (cf.~iterated addition in \cref{app:arithmetic}). It can similarly print a (simpler) circuit to compute $\mathbf y_i$. Thus, the entire SSM layer can be simulated by an $\L$-uniform $\TC^0$ circuit.
\end{proof}

We will use \Cref{lem:general} to show that any non-gated or diagonal generalized linear SSM can be simulated in $\TC^0$.

\subsection{Non-Gated SSMs are in $\TC^0$} \label{sec:non-gated}

\begin{theorem}[Non-gated SSM] \label{thm:non-gated}
    Let $M$ be a log-precision generalized linear SSM such that, for any $i$,
    \begin{equation*}
        \barA_i = \barA, \quad
        \barB_i = \barB, \quad
        \mathbf C_i = \mathbf C, \quad
        \mathbf D_i = \mathbf D .
    \end{equation*}
    Then there exists an $\L$-uniform $\TC^0$ circuit family that computes $M$'s convolutional form.
\end{theorem}

\begin{proof}
    We prove this by showing that both conditions from \Cref{lem:general} are satisfied.
    Computing the matrix product reduces to powering $\barA^{k - j}$. Crucially, we can use the fact that matrix powering over floats is in $\L$-uniform $\TC^0$ (\Cref{cor:float-matrix-power}, extending \citealp{mereghetti2000threshold}).
    Finally, $\barA_i, \barB_i, \mathbf C_i,$ and $\mathbf D_i$ can be computed in $\L$-uniform $\TC^0$ because they are constants.
\end{proof}

As S4 satisfies the premises of \Cref{thm:non-gated}, we obtain:

\begin{corollary} \label{cor:s4}
    There exists an $\L$-uniform $\TC^0$ circuit family that computes S4's convolutional form.
\end{corollary}

\subsection{Diagonal SSMs are in $\TC^0$} \label{sec:diagonal}

\begin{theorem}[Diagonal SSM] \label{thm:diagonal}
Let $M$ be a log-precision generalized linear SSM where for $1 \leq i \leq n$:
\begin{compactenum}
    \item the transition matrix $\barA_i$ is diagonal, denoted $\diag(\bar{\mathbf{a}}_i)$ where $\bar{\mathbf a}_i \in \mathbb R^d$;
    \item each of $\bar{\mathbf a}_i, \barB_i, \mathbf C_i$ and $\mathbf D_i$ can be computed in $\L$-uniform $\TC^0$ as a function of $\mathbf x_i$.
\end{compactenum}
Then there exists an $\L$-uniform $\TC^0$ circuit family that computes $M$'s convolutional form.
\end{theorem}

\begin{proof}
    By the first condition, 
    $\prod_i \barA_i = \prod_i \diag(\bar{\mathbf{a}}_i)$.
    Iterated multiplication of diagonal matrices is reducible to several iterated scalar multiplications, placing this product in $\L$-uniform $\TC^0$ (\Cref{cor:iterated-float-product}). 
    The second condition from \Cref{lem:general} is satisfied by assumption. Thus, $M$'s convolutional form is computable in $\L$-uniform $\TC^0$.
\end{proof}

Since S6 satisfies the premises of \Cref{thm:diagonal}, we have:

\begin{corollary} \label{cor:s6-mamba}
    There exists an $\L$-uniform $\TC^0$ circuit family that computes S6's convolutional form (used by Mamba).
\end{corollary}

\begin{proof}
    For the first condition, note that S6's transition matrix $\barA_i$ is defined as $\exp(\delta_i \mathbf{A})$ for a fixed diagonal $\mathbf A$.
    The set of diagonal matrices is closed under scalar multiplication and matrix exponentiation, so $\barA_i$ is also diagonal.
    See \Cref{app:s6} for a proof that the second condition is satisfied by the S6 parameterization.
\end{proof}

\Cref{app:diagonalizable-s6} extends \cref{thm:diagonal} to hold even when $\{\barA_i\}$ are \textbf{simultaneously diagonalizable}, rather than just diagonal. Specifically, we prove the following generalization:

\begin{restatable}[Simultaneously diagonalizable SSM]{theorem}{diagonalizabletheorem} \label{thm:diagonalizable}
Let $\mathbf W$ be a fixed matrix.
Let $M$ be a log-precision generalized linear SSM such that, for $1 \leq i \leq n$,
\begin{compactenum}
    \item the transition matrix $\barA_i$ is computable to log precision by the expression $\mathbf{W}\diag(\bar{\mathbf{a}}_i)\mathbf{W}^{-1}$, where $\bar{\mathbf a}_i \in \mathbb R^d$;
    \item each of $\bar{\mathbf a}_i, \barB_i, \mathbf C_i$ and $\mathbf D_i$ can be computed in $\L$-uniform $\TC^0$ as a function of $\mathbf x_i$.
\end{compactenum}
Then there exists an $\L$-uniform $\TC^0$ circuit family that computes $M$'s convolutional form.
\end{restatable}

This, in turn, allows us to prove that a simultaneously diagonalizable transition matrix generalization of the S6 layer is also in $\L$-uniform $\TC^0$ (\cref{cor:diagonalizable-s6}).

\subsection{Discussion}

\Cref{thm:non-gated,thm:diagonal} establish that common SSM variants, like transformers, can only express solutions to problems in the class $\TC^0$. This means these SSMs cannot solve $\NC^1$-hard problems like evaluating boolean formulas or graph connectivity. In particular, it shows that they are limited as far as their state tracking capabilities as they are unable to compose permutations (solve the $S_5$ word problem):

\begin{corollary}
    Assuming $\TC^0 \neq \NC^1$, no log-precision SSM with the S4 or S6 architecture can solve the word problem for $S_5$ or any other $\NC^1$-hard problem.
\end{corollary}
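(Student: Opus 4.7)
The plan is to obtain this corollary as a direct consequence of Theorems~\ref{thm:s4} and~\ref{thm:s6} combined with Barrington's classical result that the word problem for $S_5$ is $\NC^1$-complete under $\AC^0$ reductions~\citep{barrington1989bounded}. I would proceed by contradiction.

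First, suppose for contradiction that some log-precision SSM $M$ with the S4 or S6 architecture solves an $\NC^1$-hard language $P$. By Theorem~\ref{thm:s4} (respectively Theorem~\ref{thm:s6}), the function computed by $M$'s convolutional form is in $\L$-uniform $\TC^0$. Under our arithmetic assumptions the recurrent and convolutional forms agree, so $P \in \L$-uniform $\TC^0$.

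Next, because $P$ is $\NC^1$-hard under $\AC^0$ (and hence $\TC^0$) reductions, and $\L$-uniform $\TC^0$ is closed under $\AC^0$ reductions (since $\AC^0 \subseteq \TC^0$ and one can compose an $\AC^0$ reduction with an $\L$-uniform $\TC^0$ circuit while preserving $\L$-uniformity by a standard log-space traversal of the composed circuit), we conclude $\NC^1 \subseteq \L$-uniform $\TC^0$, contradicting $\TC^0 \neq \NC^1$. Specializing $P$ to the word problem for $S_5$, whose $\NC^1$-completeness is Barrington's theorem, yields the specific claim; the general claim for any $\NC^1$-hard $P$ is obtained by the same argument.

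There is no real obstacle here beyond the standard uniformity bookkeeping for composing an $\AC^0$ reduction on top of an $\L$-uniform $\TC^0$ simulation; all the substantive content lives in Theorems~\ref{thm:s4} and~\ref{thm:s6} and in Barrington's theorem. Accordingly, I would keep the write-up to a short paragraph that cites both ingredients and performs the contradiction explicitly, rather than redeveloping any machinery.
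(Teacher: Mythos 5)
Your argument is exactly the intended derivation: the paper treats this corollary as an immediate consequence of Theorems~\ref{thm:s4} and~\ref{thm:s6} together with Barrington's $\NC^1$-completeness of the $S_5$ word problem, and your contradiction argument (an SSM solving an $\NC^1$-hard $P$ would place $P$, and hence $\NC^1$, inside $\L$-uniform $\TC^0$) is the standard way to make that implication explicit. The uniformity bookkeeping you flag is correctly handled and matches the paper's conventions, so there is nothing to add.
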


In contrast, RNNs can easily express $S_5$ via standard constructions that encode finite-state transitions into an RNN \citep{minsky1954neural,merrill-2019-sequential}. This shows that SSMs cannot express some kinds of state tracking and recurrence that RNNs can. This tempers the claim from \citet[Lemma 3.2]{gu2021combining} that SSMs have the expressive power to simulate RNNs, which relied on the assumption that SSMs can have \emph{infinite depth}. In a more realistic setting with a bounded number of layers, our results show SSMs cannot express many state-tracking problems, including those which can be solved by fixed-depth RNNs.


\section{Extending the Expressive Power of SSMs}

We have shown that S4 and S6, despite their seemingly ``stateful'' design, cannot express problems outside $\TC^0$, which includes state tracking like $S_5$. We show how SSMs can be extended to close the gap in expressive power with RNNs, allowing them to express $S_5$.
Two simple extensions can bring about this increase in expressive power, assuming layer input dimension $k > 1$. First, adding a nonlinearity makes the SSM into an RNN, adding expressive power but degrading parallelism. On the other hand, allowing $\barA_i$ to be input-dependent makes the SSM more like a weighted finite automaton (WFA; \citealp{Mohri2009}), adding expressive power while remaining parallelizable.

\subsection{Via Nonlinearities}

One extension to the SSM is to add a nonlinearity, effectively making it an RNN.
We call this an RNN-SSM layer:
\begin{equation*}
    \mathbf h_i = \sgn \left( \barA \mathbf h_{i-1} + \barB x_i \right) .
\end{equation*}
A model with this architecture can solve the $S_5$ word problem when the input dimension $k > 1$:
\begin{theorem} \label{thm:rnn-ssm}
    For any regular language $L \subseteq \Sigma^*$ (including the word problem for $S_5$), there exists a one-layer log-precision RNN-SSM with $k = \abs{\Sigma}$ that recognizes $L$.
\end{theorem}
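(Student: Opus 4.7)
The plan is to adapt the classical Minsky-style construction that simulates a deterministic finite automaton (DFA) by a one-layer thresholded RNN, specializing it to the recurrence $h_i = \sgn(A h_{i-1} + B x_i)$. Every regular language $L$ is recognized by some DFA $M_L = (Q, \Sigma, \delta, q_0, F)$, so it suffices to exhibit one-layer RNN-SSM parameters $(A, B, C, D)$ that track $M_L$'s state exactly, together with an output projection that reads off acceptance. The $S_5$ word problem (more precisely, its ``does this word reduce to a fixed target element?'' variant) is itself a regular language, so it falls out as an immediate special case.

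Concretely, I would set the hidden dimension to $d = |Q| \cdot |\Sigma|$ and maintain as a loop invariant that $h_i \in \{0, 1\}^d$ is the one-hot indicator of the pair $(q_i, \sigma_i)$, where $q_i = \delta(q_{i-1}, \sigma_i)$ is the current DFA state and $\sigma_i$ is the symbol just read. I would take $x_i$ to be the one-hot encoding of $\sigma_i$ augmented with a constant ``always-on'' coordinate so that $Bx_i$ can realize an additive bias. The key trick is to set $A_{(q,\sigma),(q',\sigma')} = [\delta(q',\sigma) = q]$ (independent of $\sigma'$) together with weights in $B$ that make the $(q,\sigma)$-coordinate of the preactivation equal $[\delta(q_{i-1},\sigma) = q] + [\sigma_i = \sigma] - 3/2$. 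Both indicators lie in $\{0,1\}$, so $\sgn$ outputs $1$ exactly when both fire, which happens only at $(q,\sigma) = (q_i,\sigma_i)$; the one-hot invariant is thus preserved inductively from an initial $h_0$ encoding $(q_0, \bot)$ (either taken as a learned initial state or produced by feeding a distinguished start token). Acceptance is read off by choosing $C_{1,(q,\sigma)} = [q \in F]$ and $D = 0$, giving $y_n = [q_n \in F]$.

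The main obstacle is twofold. First, the stated recurrence has no explicit bias, yet the Boolean conjunction above relies on a threshold offset of $-3/2$; I would handle this by augmenting the input embedding with a constant coordinate, a standard and lossless device that fits naturally into the SSM pipeline. Second, the log-precision requirement needs checking: every entry of $A$, $B$, and $C$ lies in $\{-3/2, 0, 1\}$ and every hidden activation is $0$ or $1$, so the entire simulation uses only $\O(1)$ bits of precision, trivially within the $\O(\log n)$ budget. Granting these, the construction simulates $M_L$ exactly on inputs of any length, so the one-layer RNN-SSM recognizes $L$, and the statement about $S_5$ follows as a corollary.
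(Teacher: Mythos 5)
Your construction is correct and is exactly the Minsky-style DFA-simulation that the paper invokes by citation (to \citealp{minsky1954neural,merrill-2019-sequential}) without spelling it out; you have simply made the standard argument explicit, including the product-state trick, the bias via a constant input coordinate, and the $\O(1)$-precision check. Same approach, just unfolded.
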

\begin{proof}
    The standard constructions for simulating automata with RNNs \citep[cf.][]{minsky1954neural,merrill-2019-sequential} apply. The condition $k = \abs{\Sigma}$ comes from needing to represent token types with linearly independent vectors.
\end{proof}


Adding a nonlinearity to the output of an SSM layer (as in Mamba) is not the same thing as an RNN-SSM. Rather, an RNN-SSM applies the nonlinearity at each recurrent update.
A downside of this approach is that it becomes nonlinear to parallelize the RNN-SSM computation graph with the SCAN algorithm used by linear SSMs \citep{BlellochTR90}. 

\subsection{Via Input-Dependent Transition Matrices} \label{sec:input-dependent}

Another way to get greater expressive power is to let the transition matrix $\barA_i$ be fully input-dependent, as explored by Liquid S4 \citep{hasani2023liquid}.
To illustrate this, we define a minimally different SSM called Input-Dependent S4 (IDS4) that achieves greater expressive power for state tracking.
Let $\pi_{\mathbf A} : \R^k \to \R^{d \times d}$ be some affine transformation where the output vector is interpreted as a $d \times d$ matrix,
and let 
$\barA_i = \pi_{\mathbf A}(\mathbf x_i)$.
Let $\barB, \mathbf C, \mathbf D$ be fixed (w.r.t.~$i$).
By \Cref{def:generalized-linear-ssm}, the IDS4 convolutional form computes an \emph{iterated product} of non-diagonal, input-dependent matrices:
\begin{equation*}
    \mathbf h_i = \sum_{j=1}^i \left( \prod_{k=j+1}^i \pi_{\mathbf A}(\mathbf x_i) \right) \barB \mathbf x_j .
\end{equation*}

In contrast to matrix powers or iterated products of diagonal matrices, iterated products of \emph{general} matrices cannot be computed in $\TC^0$ \citep{mereghetti2000threshold}.
This means that the arguments from \Cref{thm:non-gated,thm:diagonal} will not go through for IDS4. In fact, we can show IDS4 gains expressive power beyond $\TC^0$:

\begin{figure*}
    \centering
    \includegraphics[width=0.95\textwidth]{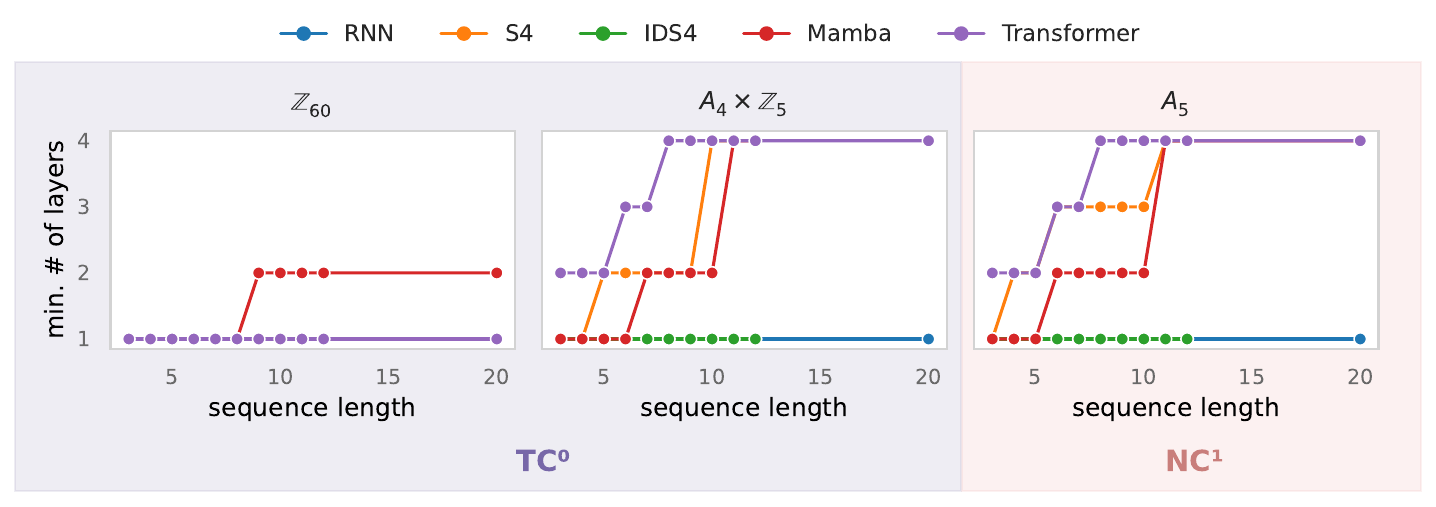}
    \caption{Minimum number of layers (lower is better) required to attain $>90\%$ validation accuracy on group multiplication problems by sequence length and group. RNN and IDS4 models of constant depth can solve arbitrarily long sequences, while transformer, S4, and Mamba models require depths monotonically increasing in sequence length.}
    \label{fig:group-data}
\end{figure*}

\begin{theorem} \label{thm:wfa-ssm}
    For any regular language $L \subseteq \Sigma^*$ (including the word problem for $S_5$), there exists a one-layer log-precision IDS4 SSM with $k = \abs{\Sigma}$ that recognizes $\$L$, where $\$ \not\in \Sigma$ is a special beginning-of-string symbol.
\end{theorem}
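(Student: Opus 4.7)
The plan is to encode a deterministic finite automaton (DFA) for $L$ directly into the parameters of a one-layer WFA-SSM, using the classical equivalence between regular languages and DFAs. Let $D = (Q, \Sigma, \delta, q_0, F)$ be a DFA recognizing $L$ with $d = |Q|$. I would represent each state $q \in Q$ as the one-hot vector $e_q \in \mathbb{R}^d$ and embed each symbol $\sigma \in \Sigma \cup \{\$\}$ as its one-hot vector in $\mathbb{R}^{|\Sigma|+1}$. For each $\sigma \in \Sigma$, let $T_\sigma \in \{0,1\}^{d \times d}$ be the transition matrix with $T_\sigma e_q = e_{\delta(q,\sigma)}$.

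Since $s_A$ is an affine map from the (one-hot) symbol embedding into $\mathbb{R}^{d \times d}$, I choose it so that $s_A(e_\sigma) = T_\sigma - I$ for $\sigma \in \Sigma$ and $s_A(e_\$) = -I$; this yields $A(e_\$) = 0$ and $A(e_\sigma) = T_\sigma$ for $\sigma \in \Sigma$. Similarly, set the affine map $B$ so that $B e_\$ = e_{q_0}$ and $B e_\sigma = 0$ for $\sigma \in \Sigma$. Starting from $h_0 = 0$, the $\$$ step gives $h_1 = A(e_\$) \cdot 0 + B e_\$ = e_{q_0}$, and a one-line induction shows that for every later position $i \geq 2$ with input $\sigma_i \in \Sigma$, $h_i = T_{\sigma_i} h_{i-1}$ is the one-hot encoding of the DFA state reached after reading $\sigma_2 \cdots \sigma_i$. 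Finally, picking $C = \sum_{q \in F} e_q^\top$ and zeroing out the skip/$x_n$-dependent term in the layer output makes $y_n = 1$ iff the final state is accepting, so thresholding at $1/2$ decides membership in $\$L$.

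Since all parameters and activations are $\{0,1\}$-valued and depend only on the fixed DFA $D$, log precision is trivially sufficient and no approximation issues arise. I do not anticipate a genuine obstacle; the construction is essentially the textbook DFA-to-WFA embedding specialized to the WFA-SSM recurrence. The only subtlety, and the reason the theorem is stated for $\$L$ rather than $L$, is that the recurrence has a fixed initial state $h_0 = 0$ with no direct mechanism to inject the DFA's start state $q_0$; the $\$$ symbol supplies this by letting $B$ deposit $e_{q_0}$ into $h_1$ while $A(e_\$) = 0$ erases the contribution of $h_0$.
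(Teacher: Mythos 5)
Your proposal is correct and uses essentially the same idea as the paper: encode the DFA's transition functions as the input-dependent matrices $A(\sigma)$, let the $\$$ symbol inject an initial vector into $h_1$ via $B$ (with $B\sigma = 0$ for $\sigma \in \Sigma$ so that the convolutional sum collapses to a single term), and read off acceptance with the output projection. Both proofs rest on the WFA-SSM's iterated matrix product being able to realize the transition monoid of a DFA.

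There is a small but real difference in direction, and it works in your favor. The paper initializes with $B\$ = \mathbf{1}_F$ (the indicator of accepting states) and lets the matrices act on that, framing the computation in terms of the transition monoid element $\delta_w$; it then asserts $h_{|x|} = \delta_w$, which is a type mismatch (a vector equated with a function $Q \to Q$), and, depending on the matrix-orientation convention, this ``backward'' construction either produces non-boolean counts or composes the symbols in reversed order. You instead run the DFA forward: $B\$ = e_{q_0}$, $A(e_\sigma) = T_\sigma$, so $h_i = e_{\delta(q_0,\,x_2\cdots x_i)}$ by an easy induction, and $C = \sum_{q\in F} e_q^\top$ with $D = 0$ gives a clean $\{0,1\}$-valued output. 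This avoids the ordering and typing ambiguities and makes the role of $\$$ (injecting $e_{q_0}$ when there is no built-in initial state) crisp. Your choice $A(e_\$) = 0$ is harmless but unnecessary: $h_0 = 0$ already kills that term, and with your $B$ the $j{=}1$ term is the only survivor in the convolutional form, so $A(x_1)$ never appears. The affine map $s_A$ with the values you prescribe exists because the one-hot symbol embeddings are linearly independent, and all parameters are $\{0,\pm1\}$-valued, so log precision is indeed trivial.
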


\begin{proof}
    It suffices to show that IDS4 can simulate a deterministic finite automaton (DFA). We do this via a transition monoid construction. For any $w \in \Sigma^*$, let $\delta_w : Q \to Q$ be the function mapping a state to its eventual destination state after $w$ is read from that state. For any DFA, this set of functions forms a finite monoid (the \emph{transition monoid}) under composition,
    following from the Myhill-Nerode theorem \citep{hopcroft2001introduction}.
    Further, each monoid element $\delta_w$ can be represented as a boolean transition matrix, making matrix multiplication isomorphic to monoid composition.
    Computing the transition monoid of a DFA allows recognizing valid words: compute the monoid element for a word by multiplying the elements for its tokens and then check whether the initial state maps to an accepting state.

    Fix a DFA and its transition monoid $\delta$.
    To complete the proof, we show there exists an SSM that, for all $w \in \Sigma^*$, computes $\delta_w$ given input $x=\$w$.
    Let $\barA_i$ be the transition matrix representation of $\delta_{x_i}$. Matrix multiplication is isomorphic to composition of transition monoid elements.
    We view indices in $\mathbf h_i$ as states and define $\barB \$$ as $1$ at the initial state $q_0$ and $0$ elsewhere. For other $\sigma$, let $\barB \sigma = \vec 0$.
    This yields the following convolutional form:
    \begin{equation*}
        \mathbf h_i
        = \left( \prod_{k=2}^{i} \barA_i \right) \mathbf B \$
        \equiv \left( \bigcomp_{k=2}^{i} \delta_{x_k} \right) (q_0) .
    \end{equation*}
    Since $x = \$w$, we conclude that $\mathbf h_{\abs{x}} \equiv \delta_w(q_0)$.
\end{proof}



\subsection{Discussion}

\Cref{thm:rnn-ssm,thm:wfa-ssm} show that two minimal extensions of the SSM enable expressive power outside $\TC^0$, allowing the model to solve hard state-tracking problems:

\begin{corollary}
    There exist a one-layer log-precision RNN-SSM and WFA-SSM that express the word problem for $S_5$ (with a beginning-of-string symbol), and these these SSMs cannot be simulated in $\TC^0$.
\end{corollary}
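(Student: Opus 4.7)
The plan is to combine Theorems~\ref{thm:rnn-ssm} and~\ref{thm:wfa-ssm} with Barrington's theorem on the $\NC^1$-completeness of the $S_5$ word problem. First, I would observe that the $S_5$ word problem can be cast as a regular language recognition problem: for each target element $g \in S_5$, the set of strings over a generating set whose product equals $g$ is a regular language, recognized by a DFA whose states are elements of $S_5$ and whose transitions implement right-multiplication. Applying Theorem~\ref{thm:rnn-ssm} to any such language immediately yields a one-layer log-precision RNN-SSM recognizing it, and applying Theorem~\ref{thm:wfa-ssm} (with the prepended beginning-of-string symbol \$) yields a one-layer log-precision WFA-SSM recognizing it.

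For the second claim, I would argue by contradiction. Suppose, say, that the constructed RNN-SSM (or WFA-SSM) could be simulated by an $\L$-uniform $\TC^0$ circuit family. Then, composing this $\TC^0$ simulation with the straightforward input encoding (mapping each generator token to its fixed-size bit representation, which is an $\AC^0$ operation), we would obtain an $\L$-uniform $\TC^0$ circuit family solving the $S_5$ word problem. By \citet{barrington1989bounded}, this word problem is $\NC^1$-hard under $\AC^0$ reductions, so placing it in $\TC^0$ would collapse $\NC^1 \subseteq \TC^0$, contradicting the standing conjecture that $\TC^0 \neq \NC^1$.

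Essentially no new machinery is required beyond what has already been developed: Theorems~\ref{thm:rnn-ssm} and~\ref{thm:wfa-ssm} supply the positive constructions, and Barrington's theorem plus the $\TC^0 \neq \NC^1$ assumption supply the negative direction. The only mild subtlety, which I would want to state explicitly, is checking that ``expressing the $S_5$ word problem'' in the SSM's input/output convention and ``being simulable in $\L$-uniform $\TC^0$'' are compatible notions — but since the alphabet of $S_5$ generators is finite and tokens are encoded in a fixed number of bits, this compatibility is immediate and does not affect the reduction.
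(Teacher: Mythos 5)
Your proof is correct and follows the argument the paper intends: the existence claims are immediate instantiations of \Cref{thm:rnn-ssm} and \Cref{thm:wfa-ssm} (using the finite-monoid/DFA view of the $S_5$ word problem as a regular language), and the non-simulability follows because any $\TC^0$ simulation would, after composing with a trivial $\AC^0$ token encoding, place the $\NC^1$-complete $S_5$ word problem \citep{barrington1989bounded} inside $\TC^0$. You are also right to make the $\TC^0 \neq \NC^1$ assumption explicit; the corollary's phrasing omits it (unlike the earlier corollary following \Cref{thm:s4,thm:s6}), but the negative direction is indeed conditional on that standard conjecture.
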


But would these variants of SSMs be feasible to use in practice? Besides expressive power, there are two competing practical concerns that might make these extensions problematic: parallelism and the impact on learning dynamics.

\noindent \textbf{Parallelism.}
To be used effectively in an LLM, a model architecture must be parallelizable on practical hardware. Architectures in $\TC^0$ are parallelizable by design \citep{merrill-sabharwal-2023-parallelism}, but architectures in $\NC^1$ may still be parallelizable to log depth even if they cannot be parallelized to constant depth. For IDS4, the bottleneck would be computing iterated matrix product with a log-depth computation graph. This could be achieved with the SCAN algorithm \citep{BlellochTR90} similar to S4 and S6. In contrast, it is less clear how to parallelize a model with a nonlinearity.

\textbf{Learning Dynamics.} Another potential concern for IDS4 is that learning dynamics could be degraded. In particular, an iterated product of matrices may lead to vanishing or exploding gradients. However, this is already potentially an issue for the S6 architecture, where the selective gating involves computing an iterated product of scalars.

\section{Can SSMs Learn Permutations in Practice?}

Having established theoretical limitations of SSMs for state tracking, we empirically test
how well SSMs can learn such tasks,
focusing on the $A_5$ word problem. Since this problem is $\NC^1$-complete and transformers, S4, and Mamba can only express functions in $\TC^0$, these models should require a depth that grows with the input length to solve this problem.


\noindent \textbf{Task.}
We model word problems (see \Cref{sec:word-problems}) as a token-tagging task. Models are given as input a sequence $g_0g_1g_2 \cdots g_n$ drawn from one of $A_5$, $A_4 \times \mathbb{Z}_5$, or $\mathbb{Z}_{60}$. At each step $i$, the label is the product of the first $i$ elements of the sequence. Modeling the problem as a tagging task rather than as a sequence classification task provides the models with more supervision during training, making it as easy as possible to learn the correct function. We tokenize inputs such that each element gets a unique token.

\noindent \textbf{Models.}
We train a transformer as a $\TC^0$ baseline, an RNN that we expect can perform state tracking, and three SSMs: S4 \citep{gu2022efficiently}, Mamba \citep{gu2023mamba}, and IDS4 (\Cref{sec:input-dependent}).
For IDS4, we initialize the affine projection $\alpha$ as a random normal centered around the identity: $\alpha(\mathbf x_i) \sim \mathbf I + \mathcal N(0, \sigma^2)$. 
This ensures that, at initialization, input-dependent transitions tend to propagate the previous state, which we expect to aid learning efficiency.

\noindent \textbf{Experimental Setup.}
We train models on sequences of length $n$ for successively larger values of $n$ and report full-sequence accuracy on a test set.\footnote{We always include all $3600$ pairwise sequences of length $2$ in the training data along with the training split of length-$n$ sequences.} To validate the prediction that SSMs and transformers require growing depth to solve longer $A_5$ word problems, we plot the minimum depth with 90\% test accuracy as a function of input sequence length. 

\noindent \textbf{Results.}
\Cref{fig:group-data} shows single-layer RNN and IDS4 models learn the word problem for arbitrarily long sequences for all three groups. In contrast, transformer, S4, and Mamba models require depth monotonically increasing in sequence length to attain good test accuracy for the non-commutative groups. We draw three conclusions from this:

    1. As expected, S4 and Mamba show the same limitations as transformers on the $A_5$ word problem. Longer $A_5$ sequences require deeper models, consistent with these models being in $\TC^0$. In contrast, RNNs (\Cref{thm:rnn-ssm}) and IDS4 (\Cref{thm:wfa-ssm}) can efficiently solve
    the $A_5$ word problem.
    
    2. Transformers, S4, and Mamba require greater depth even for $A_4 \times \mathbb{Z}_5$, which can be theoretically expressed by $\TC^0$ circuits. Although transformer and Mamba models of a given depth perform as good or better on $A_4 \times \mathbb{Z}_5$ as they on $A_5$, they still require increasingly many layers to handle proportionally longer sequences. There are two possible interpretations of this. First, it could be that while these word problems are expressible in $\TC^0$, they cannot be expressed by S4, Mamba, or transformers (which can each likely recognize only a proper subset of $\TC^0$). On the other hand, it is possible that these word problems \emph{are} expressible by transformers, S4, and Mamba but that effectively learning a constant-depth solution is difficult.
    
    3. Despite this limitation, S4 and Mamba appear \emph{empirically better} than transformer at approximate state tracking on the non-commutative tasks. For length-$n$ sequences from $A_4\times\mathbb{Z}_5$ or $A_5$, the transformer requires at least as many (and frequently more) layers as S4 or Mamba.

\section{Conclusion}

We formally analyzed a family of generalized linear SSMs and showed that, like transformers, common SSM variants including S4 and Mamba can only express computation within the complexity class $\L$-uniform $\TC^0$ of highly parallel computations. This means they cannot solve inherently sequential problems like graph connectivity, boolean formula evaluation, and---of particular interest for state tracking---the permutation composition problem $S_5$. $S_5$ can be naturally expressed by true recurrent models like RNNs and captures the essence of hard state tracking due to its $\NC^1$-completeness. In practice, one-layer RNNs can easily learn a task capturing $S_5$ while linear SSMs require depth growing with the sequence length. These results reveal that S4, Mamba, and related SSMs cannot truly track state: rather, they can only solve simple state-tracking problems for which shallow shortcuts exist \citep{liu2023transformers}.

On the other hand, we showed that an input-dependent SSM similar to \citepos{hasani2023liquid} Liquid S4 can both express and learn the $S_5$ word problem, providing evidence that the expressiveness limitations of current SSMs can be overcome.
Ultimately, this line of work could unlock new neural architectures that balance the parallelism of transformers and SSMs with full expressive power for state tracking, enabling LLMs that can benefit from scale while enjoying a greater capacity to reason about games, code, and language.

\section*{Impact Statement}

This paper aims to advance the foundational understanding of state-space architectures for deep learning.
Such work can affect the development and deployment of deep learning models in a variety of ways, which in turn can have societal impacts. However, we find it difficult to meaningfully speculate about or anticipate these downstream impacts here.

\section*{Acknowledgments}
This work benefited from discussions with and valuable feedback from Chris Barker, Stefano Ermon, and Charles Foster.
It was supported in part through the NYU IT High Performance Computing resources, services, and staff expertise. It
was funded by NSF award 1922658, and WM was
supported by an NSF graduate research fellowship, AI2, and Two Sigma.

\bibliographystyle{icml2024}
\bibliography{references}

\begin{thebibliography}{32}
\providecommand{\natexlab}[1]{#1}
\providecommand{\url}[1]{\texttt{#1}}
\expandafter\ifx\csname urlstyle\endcsname\relax
  \providecommand{\doi}[1]{doi: #1}\else
  \providecommand{\doi}{doi: \begingroup \urlstyle{rm}\Url}\fi

\bibitem[Angluin et~al.(2023)Angluin, Chiang, and Yang]{angluin-2023-BRASP}
Angluin, D., Chiang, D., and Yang, A.
\newblock Masked hard-attention transformers and {B}oolean {RASP} recognize exactly the star-free languages, 2023.
\newblock {arXiv}:2310.13897.

\bibitem[Barrington(1989)]{barrington1989bounded}
Barrington, D.~A.
\newblock Bounded-width polynomial-size branching programs recognize exactly those languages in nc1.
\newblock \emph{Journal of Computer and System Sciences}, 38\penalty0 (1):\penalty0 150--164, 1989.
\newblock URL \url{https://www.sciencedirect.com/science/article/pii/0022000089900378}.

\bibitem[Blelloch(1990)]{BlellochTR90}
Blelloch, G.~E.
\newblock Prefix sums and their applications.
\newblock Technical Report CMU-CS-90-190, School of Computer Science, Carnegie Mellon University, November 1990.

\bibitem[Chiang et~al.(2023)Chiang, Cholak, and Pillay]{chiang2023tighter}
Chiang, D., Cholak, P., and Pillay, A.
\newblock Tighter bounds on the expressivity of transformer encoders.
\newblock In \emph{ICML}, 2023.

\bibitem[Feng et~al.(2023)Feng, Zhang, Gu, Ye, He, and Wang]{feng2023towards}
Feng, G., Zhang, B., Gu, Y., Ye, H., He, D., and Wang, L.
\newblock Towards revealing the mystery behind chain of thought: A theoretical perspective.
\newblock In \emph{NeurIPS}, 2023.

\bibitem[Fu et~al.(2023)Fu, Dao, Saab, Thomas, Rudra, and Re]{fu2023hungry}
Fu, D.~Y., Dao, T., Saab, K.~K., Thomas, A.~W., Rudra, A., and Re, C.
\newblock Hungry hungry hippos: Towards language modeling with state space models.
\newblock In \emph{ICLR}, 2023.

\bibitem[Gu \& Dao(2023)Gu and Dao]{gu2023mamba}
Gu, A. and Dao, T.
\newblock Mamba: Linear-time sequence modeling with selective state spaces, 2023.
\newblock arXiv:2312.00752.

\bibitem[Gu et~al.(2021)Gu, Johnson, Goel, Saab, Dao, Rudra, and Re]{gu2021combining}
Gu, A., Johnson, I., Goel, K., Saab, K.~K., Dao, T., Rudra, A., and Re, C.
\newblock Combining recurrent, convolutional, and continuous-time models with linear state space layers.
\newblock In \emph{NeurIPS}, 2021.

\bibitem[Gu et~al.(2022{\natexlab{a}})Gu, Goel, and Re]{gu2022efficiently}
Gu, A., Goel, K., and Re, C.
\newblock Efficiently modeling long sequences with structured state spaces.
\newblock In \emph{ICLR}, 2022{\natexlab{a}}.

\bibitem[Gu et~al.(2022{\natexlab{b}})Gu, Goel, Saab, and Ré]{gu2022blog}
Gu, A., Goel, K., Saab, K., and Ré, C.
\newblock Structured state spaces: Combining continuous-time, recurrent, and convolutional models, January 2022{\natexlab{b}}.
\newblock URL \url{https://hazyresearch.stanford.edu/blog/2022-01-14-s4-3}.
\newblock Blog post accessed January 31, 2024.

\bibitem[Hao et~al.(2022)Hao, Angluin, and Frank]{hao2022ac0}
Hao, S., Angluin, D., and Frank, R.
\newblock Formal language recognition by hard attention transformers: Perspectives from circuit complexity.
\newblock \emph{TACL}, 10:\penalty0 800--810, 2022.

\bibitem[Hasani et~al.(2023)Hasani, Lechner, Wang, Chahine, Amini, and Rus]{hasani2023liquid}
Hasani, R., Lechner, M., Wang, T.-H., Chahine, M., Amini, A., and Rus, D.
\newblock Liquid structural state-space models.
\newblock In \emph{ICLR}, 2023.

\bibitem[Heim(1983)]{heim1983file}
Heim, I.
\newblock File change semantics and the familiarity theory of definiteness.
\newblock \emph{Semantics Critical Concepts in Linguistics}, pp.\  108--135, 1983.

\bibitem[Hesse(2001)]{hesse2001division}
Hesse, W.
\newblock Division is in uniform {$TC^0$}.
\newblock In \emph{International Colloquium on Automata, Languages, and Programming}, pp.\  104--114, 2001.

\bibitem[Hesse et~al.(2002)Hesse, Allender, and Barrington]{Hesse2002UniformCT}
Hesse, W., Allender, E., and Barrington, D. A.~M.
\newblock Uniform constant-depth threshold circuits for division and iterated multiplication.
\newblock \emph{J. Comput. Syst. Sci.}, 65:\penalty0 695--716, 2002.

\bibitem[Hopcroft et~al.(2001)Hopcroft, Motwani, and Ullman]{hopcroft2001introduction}
Hopcroft, J.~E., Motwani, R., and Ullman, J.~D.
\newblock Introduction to automata theory, languages, and computation.
\newblock \emph{ACM SIGACT News}, 32\penalty0 (1):\penalty0 60--65, 2001.

\bibitem[Immerman \& Landau(1989)Immerman and Landau]{immerman1989complexity}
Immerman, N. and Landau, S.
\newblock The complexity of iterated multiplication.
\newblock In \emph{[1989] Proceedings. Structure in Complexity Theory Fourth Annual Conference}, pp.\  104--111, 1989.
\newblock \doi{10.1109/SCT.1989.41816}.

\bibitem[Kim \& Schuster(2023)Kim and Schuster]{kim-schuster-2023-entity}
Kim, N. and Schuster, S.
\newblock Entity tracking in language models.
\newblock In Rogers, A., Boyd-Graber, J., and Okazaki, N. (eds.), \emph{ACL}, July 2023.

\bibitem[Krohn \& Rhodes(1965)Krohn and Rhodes]{krohn1965algebraic}
Krohn, K. and Rhodes, J.
\newblock Algebraic theory of machines. i. prime decomposition theorem for finite semigroups and machines.
\newblock \emph{Transactions of the American Mathematical Society}, 116:\penalty0 450--464, 1965.

\bibitem[Liu et~al.(2023)Liu, Ash, Goel, Krishnamurthy, and Zhang]{liu2023transformers}
Liu, B., Ash, J.~T., Goel, S., Krishnamurthy, A., and Zhang, C.
\newblock Transformers learn shortcuts to automata.
\newblock In \emph{ICLR}, 2023.

\bibitem[Mereghetti \& Palano(2000)Mereghetti and Palano]{mereghetti2000threshold}
Mereghetti, C. and Palano, B.
\newblock Threshold circuits for iterated matrix product and powering.
\newblock \emph{RAIRO-Theor. Inf. Appl.}, 34\penalty0 (1):\penalty0 39--46, 2000.
\newblock \doi{10.1051/ita:2000105}.
\newblock URL \url{https://doi.org/10.1051/ita:2000105}.

\bibitem[Merrill(2019)]{merrill-2019-sequential}
Merrill, W.
\newblock Sequential neural networks as automata.
\newblock In Eisner, J., Gall{\'e}, M., Heinz, J., Quattoni, A., and Rabusseau, G. (eds.), \emph{Proceedings of the Workshop on Deep Learning and Formal Languages: Building Bridges}, Florence, August 2019. ACL.

\bibitem[Merrill \& Sabharwal(2023{\natexlab{a}})Merrill and Sabharwal]{merrill-sabharwal-2023-parallelism}
Merrill, W. and Sabharwal, A.
\newblock The parallelism tradeoff: Limitations of log-precision transformers.
\newblock \emph{TACL}, 11, 2023{\natexlab{a}}.

\bibitem[Merrill \& Sabharwal(2023{\natexlab{b}})Merrill and Sabharwal]{merrill2023logic}
Merrill, W. and Sabharwal, A.
\newblock A logic for expressing log-precision transformers.
\newblock In \emph{NeurIPS}, 2023{\natexlab{b}}.

\bibitem[Merrill \& Sabharwal(2024)Merrill and Sabharwal]{merrill-sabharwal-2024-cot}
Merrill, W. and Sabharwal, A.
\newblock The expressive power of transformers with chain of thought.
\newblock In \emph{ICLR}, 2024.

\bibitem[Minsky(1954)]{minsky1954neural}
Minsky, M.
\newblock Neural nets and the brain-model problem.
\newblock \emph{Unpublished doctoral dissertation, Princeton University, NJ}, 1954.

\bibitem[Mohri(2009)]{Mohri2009}
Mohri, M.
\newblock \emph{Weighted Automata Algorithms}, pp.\  213--254.
\newblock Springer Berlin Heidelberg, Berlin, Heidelberg, 2009.
\newblock ISBN 978-3-642-01492-5.
\newblock \doi{10.1007/978-3-642-01492-5_6}.
\newblock URL \url{https://doi.org/10.1007/978-3-642-01492-5_6}.

\bibitem[Reif \& Tate(1992)Reif and Tate]{reif1992threshold}
Reif, J.~H. and Tate, S.~R.
\newblock On threshold circuits and polynomial computation.
\newblock \emph{SIAM Journal on Computing}, 21\penalty0 (5):\penalty0 896--908, 1992.
\newblock \doi{10.1137/0221053}.
\newblock URL \url{https://doi.org/10.1137/0221053}.

\bibitem[Rush \& Karamcheti(2022)Rush and Karamcheti]{rush2022s4}
Rush, S. and Karamcheti, S.
\newblock The annotated {S4}.
\newblock In \emph{Blog Track at ICLR 2022}, 2022.
\newblock URL \url{https://openreview.net/forum?id=xDaLPsMBZv-}.

\bibitem[Strobl et~al.(2024)Strobl, Merrill, Weiss, Chiang, and Angluin]{strobl2023transformers}
Strobl, L., Merrill, W., Weiss, G., Chiang, D., and Angluin, D.
\newblock What formal languages can transformers express? {A} survey.
\newblock \emph{TACL}, 12, 2024.

\bibitem[Toshniwal et~al.(2021)Toshniwal, Wiseman, Livescu, and Gimpel]{Toshniwal2021ChessAA}
Toshniwal, S., Wiseman, S., Livescu, K., and Gimpel, K.
\newblock Chess as a testbed for language model state tracking.
\newblock In \emph{AAAI}, 2021.

\bibitem[Wang et~al.(2024)Wang, Gangavarapu, Yan, and Rush]{wang2024mambabyte}
Wang, J., Gangavarapu, T., Yan, J.~N., and Rush, A.~M.
\newblock Mambabyte: Token-free selective state space model, 2024.
\newblock arXiv:2401.13660.

\end{thebibliography}

\appendix

\section{Floating-Point Arithmetic}
\label{app:arithmetic}

Our results use the \textbf{log-precision floating point} model used by \citet{merrill2023logic} to analyze transformers. For some fixed constant $c \in \Z^+$, a $c \log n$ precision float is a tuple $\langle m, e \rangle$ where $m, e$ are signed integers together taking $c \log n$ bits. Using $|x|$ to mean the number of bits used to represent integer $x$, this float represents the value $m \cdot 2^{e - |m| + 1}$.

Unlike for integers, arithmetic operations over log-precision floats are not closed. That is, the product $\phi_1 \times \phi_2$ of two $p$-precision floats is a well-defined number but may not be exactly representable as a $p$-precision float.
It is thus necessary to define approximate versions of these operations when formalizing log-precision floating-point arithmetic.
To this end, \citet{merrill-sabharwal-2023-parallelism} define a natural notion of approximate iterated addition over log-precision floats and show that it is computable in $\L$-uniform $\TC^0$.
We can naturally apply their definition of iterated addition for floats to matrices of floating points, defining iterated summation over matrices of datatype $\D$ as the result of treating the numbers as reals, performing exact arithmetic, and casting the exact output $\phi$ back to $\D$, denoted $\cast_\D(\phi)$. Formally:

\begin{definition}[Iterated $\D$-matrix sum; \citealp{merrill-sabharwal-2023-parallelism}]
    For matrices $\mathbf M_1, \ldots, \mathbf M_n$ over $\D$ with the same size, their \emph{iterated $\D$-sum} is
    \begin{equation*}
        \bigoplus_{i=1}^z \mathbf M_i \, \triangleq \, \cast_\D \left( \sum_{i=1}^z \cast_\R( \mathbf M_i ) \right) .
    \end{equation*}
\end{definition}
Here $\cast_\R$ converts a number in $\D$ to the corresponding real number. $\D$ is implicit in the notations $\cast_\R$ and $\bigoplus$.
Integer addition can be obtained as a special case for $1$-dimensional matrices.
We can also analogously defined iterated summation, which will be necessary for formalizing SSMs:

\begin{definition}[Iterated $\D$-matrix product]
    \label{defn:iterated-D-product}
    For square matrices $\mathbf M_1, \ldots, \mathbf M_z$ over $\D$, their \emph{iterated $\D$-product} is
    \begin{equation*}
        \bigotimes_{i=1}^z \mathbf M_i \, \triangleq \, \cast_\D \left( \prod_{i=1}^z \cast_\R( \mathbf M_i ) \right) .
    \end{equation*}
\end{definition}


\citet{merrill-sabharwal-2023-parallelism} showed that iterated addition from for log-precision floats is in $\L$-uniform $\TC^0$. It naturally follows from their argument that \textbf{iterated addition} over log-precision float matrices is also in $\L$-uniform $\TC^0$.
In general, iterated matrix products are not necessarily computable in $\TC^0$.
However, we extend the arguments of \citet{hesse2001division} and \citet{mereghetti2000threshold} for integers to show that two special cases (iterated scalar multiplication and matrix powering) over log-precision floats are also computable in $\L$-uniform $\TC^0$.

Finally, we define a canonical value for a compositional arithmetic expression over floats that enjoys the associative property.

\begin{definition}[Flattened expression evaluation] \label{def:flattening}
Let $\phi$ be a compositional expression over floats, which may contain alternating sums and products as well as other operations like $\exp$. We define the \emph{canonical value} of $\phi$ as the value returned by the computation graph obtained by flattening all adjacent sums into a single sum (and analogously for products).
\end{definition}

\Cref{def:flattening} has the nice effect of making \Cref{defn:iterated-D-product} associative. The only results that rely on this assumption are our analysis of diagonalizable SSMs in \Cref{app:diagonalizable-s6}.
We also deal with the details of this assumption in \Cref{lem:general}, though the proof there also goes through directly without handling these details.

\subsection{Complexity of Iterated Scalar Multiplication}

The first special case of iterated matrix products we analyze is when the matrices are simply scalars (or, w.l.o.g., diagonal matrices). In this case, the iterated product can be computed in $\L$-uniform $\TC^0$.

\begin{lemma}[Iterated $\D$-product]
    \label{lem:iterated-D-product}
    Let $\phi_1, \ldots, \phi_z \in \D$ be such that $z \leq n$ and each $\phi_i$ can be represented as an $n$-bit integer. If operators $\cast_\D$ and $\cast_\R$ are in $\L$-uniform $\TC^0$, then the iterated $\D$-product $\bigotimes_{i=1}^z \phi_i$ can be computed in $\L$-uniform $\TC^0$.
\end{lemma}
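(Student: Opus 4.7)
The plan is to reduce iterated $\D$-product to iterated \emph{integer} product, which is already known to lie in $\L$-uniform $\TC^0$ by \citet{hesse2001division}. Since the hypothesis supplies $\L$-uniform $\TC^0$ circuits for both $\cast_\R$ and $\cast_\D$ and guarantees each $\phi_i$ is representable as an $n$-bit integer, the proof amounts to composing three $\TC^0$ constructions in a pipeline.

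Concretely, I would proceed in three stages. First, apply the $\cast_\R$ circuit in parallel across all $z$ inputs to produce integers $r_i := \cast_\R(\phi_i)$, each fitting in $n$ bits by hypothesis. Second, feed $(r_1,\dots,r_z)$ into Hesse's $\L$-uniform $\TC^0$ circuit for iterated integer multiplication to obtain the exact integer $P = \prod_{i=1}^z r_i$; since $z \leq n$ and each $|r_i| \leq 2^n$, the product $P$ has at most $n^2$ bits, well within the polynomial-size regime where Hesse's construction applies. Third, apply the $\cast_\D$ circuit to $P$, which by \Cref{defn:iterated-D-product} yields exactly $\bigotimes_{i=1}^z \phi_i$.

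The resulting circuit is the composition of three $\L$-uniform $\TC^0$ circuit families, and so sits in $\L$-uniform $\TC^0$ by the standard closure of this class under composition. The only real obstacle, which is essentially bookkeeping, is checking that $\L$-uniformity survives composition: the log-space Turing machine describing the combined circuit must index the output wires of each stage as the input wires of the next without exceeding logarithmic space. This works cleanly because all intermediate bit-widths are polynomially bounded (each $r_i$ uses $n$ bits and $P$ uses at most $n^2$ bits), so wire positions are encodable in $O(\log n)$ bits and the generating machine can concatenate the three sub-generators while rewriting input/output labels on the fly. No nontrivial new complexity content is required beyond invoking Hesse's theorem; the lemma is essentially a wrapping of that result through the two cast operations provided by the hypothesis.
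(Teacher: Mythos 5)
Your proposal is correct and follows essentially the same approach as the paper's proof: cast to integers via $\cast_\R$, invoke Hesse's $\L$-uniform $\TC^0$ result for iterated integer multiplication, and cast back via $\cast_\D$. Your additional remarks on bit-width bounds and the preservation of $\L$-uniformity under composition are sound elaborations of details the paper leaves implicit.
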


\begin{proof}
    By preconditions of the lemma, we can compute $y_i = \cast_\R( \phi_i )$ for each $i$ in $\L$-uniform $\TC^0$. Since each $\phi_i$ is equivalent to an $n$-bit integer, $y_i$ can be viewed as an $n$-bit integer. The iterated integer product $y = \prod_{i=1}^z y_i$ can be computed with an $\L$-uniform $\TC^0$ circuit \cite{hesse2001division}. Finally, by a precondition of the lemma, we can cast the result back to $\D$, i.e., compute $\cast_\D ( y )$ which equals the iterated $\D$-product $\bigotimes_{i=1}^z \phi_i$, with an $\L$-uniform $\TC^0$ circuit.
\end{proof}

\begin{lemma}[Iterated float product]
    \label{cor:iterated-float-product}
    Let $\phi_1, \ldots, \phi_z$ be $c \log n$ precision floats and $z \leq n$. Then the iterated float product $\bigotimes_{i=1}^z \phi_i$ can be computed in $\L$-uniform $\TC^0$.
\end{lemma}

\begin{proof}
    The idea is to convert (by scaling up) the sequence of $\phi_i$ to another sequence of floats that are all representable as integers, apply \cref{lem:iterated-D-product}, reverse the scaling, and cast the result back to a $c \log n$ precision float.

    Let $e$ be the smallest exponent across all $\phi_i$ and $q = \max \{0, -e\}$. Construct re-scaled floats $\psi_i = \phi_i 2^q$ by adding $q$ to the exponent of $\phi_i$, using up to $c \log n$ additional bits in the exponent if necessary to keep the computation exact. Note that $e$, $q$, and all $\psi_i$ can easily be computed exactly by an $\L$-uniform $\TC^0$ circuit as they involve fixed-arity arithmetic operations. Further, by construction, every $\psi_i$ has a non-negative exponent and thus represents an integer.
    
    The maximum number representable by each $c \log n$ precision float $\phi_i$ is upper bounded by $2^{n^c}$. Thus, the maximum number representable by each entry $\psi_i$ is $2^{n^c} \times 2^q = 2^{n^c + q}$. Let $m = n^c + q$. It follows that each $\psi_i$ can be equivalently represented as an $m$-bit integer. Further, this integer can be computed by left-shifting the mantissa of $\psi_i$ by a number of bits equal to the value of the exponent of $\psi_i$ (which is non-negative). Finally, this left-shift, and thus the $\cast_\R$ operation over $m$-precision floats, can be easily computed by an $\L$-uniform threshold circuit of size $\poly(m)$. In the other direction, casting from reals to $m$-precision floats can also be easily accomplished by an $\L$-uniform threshold circuit of size $\poly(m)$.

    Observing that $\psi_1, \ldots, \psi_z$ is a sequence of
    floats each representable as an $m$-bit integer,
    we now apply \cref{lem:iterated-D-product} with $\D$ being `float' to conclude that iterated float product $\tau = \bigotimes_{i=1}^z \psi_i$ can be computed by an $\L$-uniform threshold circuit of size $\poly(m)$. Since
    $m \leq 2n^c$,
    this circuit is also of size $\poly(n)$.

    Finally, to compute the original iterated float product $\bigotimes_{i=1}^z \phi_i$, we divide $\tau$ by $2^{qz}$. This can be accomplished by subtracting $qz$ from the exponent of $\tau$; again, we do this computation exactly, using up to $(c+1) \log n$ additional bits in the exponent if necessary. We then cast the resulting float back to a $c \log n$ precision float. All this can be done in $\L$-uniform $\TC^0$, finishing the proof that $\bigotimes_{i=1}^z \phi_i$ can be computed in $\L$-uniform $\TC^0$.
\end{proof}

\subsection{Complexity of Matrix Powering}

The second special case we analyze is matrix powering: i.e., a matrix product where all the matrices being powered are the same.
\citet{mereghetti2000threshold} showed that when the datatype $\D$ is $n$-bit integers, one can compute $\mathbf M^n$ in $\TC^0$. We note that their construction also works for computing $\mathbf M^z$ for any $z \leq n, z \in \Z^+$. Further, as they remark, their construction can, in fact, be done in \emph{uniform} $\TC^0$. Specifically, we observe most of their construction involves sums and products of constantly many $n$-bit integers, which can be done in $\L$-uniform $\TC^0$. The only involved step is dividing a polynomial of degree (up to) $n$ by a polynomial of degree (up to) $d-1$ and returning the remainder. It turns out that this ``polynomial division with remainder'' operation can also be performed in $\L$-uniform $\TC^0$ (see Corollary 6.5 of \citealp{Hesse2002UniformCT} and an explanation in \cref{subsec:polynomial-division}). We thus have the following extension of \citeauthor{mereghetti2000threshold}'s result:

\begin{lemma}[Integer matrix power, derived from \citealp{mereghetti2000threshold}]
    \label{lem:integer-matrix-power}
    Let $d \in \Z^+$ be a fixed constant.
    Let $\mathbf M$ be a $d \times d$ matrix over $n$-bit integers and $z \leq n, z \in \Z^+$. Then integer matrix power $\mathbf M^z$ can be computed in $\L$-uniform $\TC^0$.
\end{lemma}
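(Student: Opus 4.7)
The plan is to follow the Cayley--Hamilton-based strategy of \citet{mereghetti2000threshold}. The key observation is that, since $k$ is fixed, the $k \times k$ matrix power $M^z$ can be written as a polynomial in $M$ of degree strictly less than $k$ whose coefficients are determined by reducing $x^z$ modulo the characteristic polynomial of $M$. I would therefore decompose the computation into three parts and argue that each lies in $\L$-uniform $\TC^0$.

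First, I would form the characteristic polynomial $\chi_M(x) = \det(xI - M) = x^k + c_{k-1}x^{k-1} + \cdots + c_0$. Because $k$ is a fixed constant, the Leibniz formula expresses $\det(xI - M)$ as a fixed-arity expression in the entries of $M$, so each coefficient $c_i$ is a sum/product of constantly many $n$-bit integers and has bit-length $O(n)$. Fixed-arity integer arithmetic is in $\L$-uniform $\TC^0$, so all $c_i$ are computable in this class.

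Second, by Cayley--Hamilton $\chi_M(M) = 0$, so writing $x^z = q(x)\,\chi_M(x) + r(x)$ with $\deg r < k$ gives $M^z = r(M)$, and everything reduces to computing the remainder $r(x) = x^z \bmod \chi_M(x)$. This is the main obstacle: a polynomial division in which the dividend $x^z$ has degree up to $n$ while the divisor has fixed degree $k$ over integer coefficients of polynomial magnitude. I would invoke Corollary 6.5 of \citet{Hesse2002UniformCT}, which places polynomial division with remainder in $\L$-uniform $\TC^0$, to obtain the coefficients $r_0, \ldots, r_{k-1}$.

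Finally, evaluating $r(M) = \sum_{i=0}^{k-1} r_i M^i$ requires only a constant number of multiplications and additions of $k \times k$ integer matrices, with all intermediate entries still of polynomial bit-length; this is again fixed-arity integer arithmetic, hence in $\L$-uniform $\TC^0$. Composing the three constant-depth circuits yields an $\L$-uniform $\TC^0$ circuit family that computes $M^z$, as required.
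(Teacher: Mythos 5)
Your proof follows the same Cayley--Hamilton route as the paper, which defers to \citet{mereghetti2000threshold}'s construction: compute the characteristic polynomial via fixed-arity arithmetic (fine since $k$ is a constant), reduce $x^z$ modulo it using Corollary 6.5 of \citet{Hesse2002UniformCT} for polynomial division with remainder in $\L$-uniform $\TC^0$, and evaluate the degree-$(k-1)$ remainder at $M$. The only difference is that you spell out the Mereghetti--Palano steps explicitly whereas the paper cites them; the argument and the key lemma invoked are the same.
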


We extend this to matrix powers over $\D$ rather than integers:

\begin{lemma}[$\D$-matrix power]
    \label{lem:D-matrix-power}
    Let $d \in \Z^+$ be a fixed constant.
    Let $\mathbf M$ be a $d \times d$ matrix over a datatype $\D$ with entries equivalently representable as $n$-bit integers. Let $z \leq n, z \in \Z^+$. If operators $\cast_\D$ and $\cast_\R$ are in $\L$-uniform $\TC^0$, then $\D$-matrix power $\mathbf M^z$ can be computed in $\L$-uniform $\TC^0$.
\end{lemma}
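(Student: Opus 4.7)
The plan is to mirror the proof of \cref{lem:iterated-D-product}, replacing the integer iterated product with the integer matrix power of \cref{lem:integer-matrix-power}. The idea is that the definition $M^z = \cast_\D(\cast_\R(M)^z)$ decomposes the computation into three $\L$-uniform $\TC^0$ stages, which I will compose into a single $\L$-uniform $\TC^0$ circuit family.

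First, I will apply $\cast_\R$ entrywise to $M$ to obtain $M' \triangleq \cast_\R(M)$. Since $\cast_\R$ is in $\L$-uniform $\TC^0$ by hypothesis and $k$ is a fixed constant, this step performs $k^2$ parallel casts and remains in $\L$-uniform $\TC^0$. By the precondition on $M$, each entry of $M'$ is a real number equivalent to some $n$-bit integer, so $M'$ can be viewed as a $k \times k$ matrix of $n$-bit integers for the next stage.

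Next, I will invoke \cref{lem:integer-matrix-power} to compute $(M')^z$ exactly over the integers in $\L$-uniform $\TC^0$; this applies because $k$ is a fixed constant and $z \leq n$. Finally, I will apply $\cast_\D$ entrywise to $(M')^z$; this is in $\L$-uniform $\TC^0$ by hypothesis and yields the desired $M^z$. Composing the three stages preserves both constant depth and polynomial size, and $\L$-uniformity is preserved by a log-space machine that sequentially emits the three sub-circuits.

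I do not anticipate a serious obstacle, since all the heavy lifting is done by \cref{lem:integer-matrix-power} and the hypotheses on the cast operators. The one bookkeeping point to verify is that the intermediate integer output $(M')^z$ has entries whose bit-length is still $\poly(n)$, so that the final entrywise $\cast_\D$ stage reads only $\poly(n)$ input bits per entry and thus remains in $\L$-uniform $\TC^0$. Since $k$ is fixed and $z \leq n$, each entry of $(M')^z$ is bounded in magnitude by $k^{z-1} \cdot 2^{nz}$, giving a bit-length of $O(nz + z \log k) = \poly(n)$, which is well within the allowed size.
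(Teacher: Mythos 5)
Your proposal matches the paper's proof essentially step for step: cast $M$ to an integer matrix via $\cast_\R$, apply \cref{lem:integer-matrix-power} to power it over the integers, then cast back via $\cast_\D$, all in $\L$-uniform $\TC^0$. Your extra bookkeeping on the bit-length of $(M')^z$ is a harmless (and correct) addition that the paper leaves implicit inside its citation of \cref{lem:integer-matrix-power}.
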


\begin{proof}
    By preconditions of the lemma, we can compute $\cast_\R( \mathbf M )$ in $\L$-uniform $\TC^0$. Since the entries of $\mathbf M$ are equivalent to $n$-bit integers, $\cast_R( \mathbf M )$ can be viewed as a $d \times d$ integer matrix of $n$-bit integers. By \cref{lem:integer-matrix-power}, we can compute $\cast_R( \mathbf M )^z$ using an $\L$-uniform $\TC^0$ circuit. Finally, by a precondition of the lemma, we can cast the result back to $\D$, i.e., compute $\cast_\D(\cast_\R( \mathbf M )^z)$ which equals $\mathbf M^z$, with an $\L$-uniform $\TC^0$ circuit.
\end{proof}

\begin{lemma}[Float matrix power]
    \label{cor:float-matrix-power}
    Let $d, c \in \Z^+$ be fixed constants.
    Let $\mathbf M$ be a $d \times d$ matrix over $c \log n$ precision floats. Let $z \leq n, z \in \Z^+$. Then float matrix power $\mathbf M^z$ can be computed in $\L$-uniform $\TC^0$.
\end{lemma}

\begin{proof}
    The idea is to convert (by scaling up) $\mathbf M$ to another float matrix all whose entries are representable as integers, apply \cref{lem:D-matrix-power}, reverse the scaling, and cast the result back to $c \log n$ precision floats.

    Let $e$ be the smallest exponent across all float entries of $\mathbf M$ and $q = \max \{0, -e\}$. Construct a re-scaled float matrix $\tilde{\mathbf M} = \mathbf M 2^q$ by adding $q$ to the exponent of every entry of $\mathbf M$, using up to $c \log n$ additional bits in the exponent if necessary to keep the computation exact. Note that $e$, $q$, and $\tilde{\mathbf M}$ can easily be computed exactly by an $\L$-uniform $\TC^0$ circuit as they involve fixed-arity arithmetic operations. Further, by construction, $\tilde{\mathbf M}$ has non-negative exponents in all its float entries. Thus, every entry of $\tilde{\mathbf M}$ represents an integer.
    
    The maximum number representable by each $c \log n$ precision float in $\mathbf M$ is upper bounded by $2^{n^c}$. Thus, the maximum number representable by each entry of $\tilde{\mathbf M}$ is $2^{n^c} \times 2^q = 2^{n^c + q}$. Let $m = n^c + q$. It follows that each entry $\phi$ of $\tilde{\mathbf M}$ can be equivalently represented as an $m$-bit integer. Further, this integer can be computed by left-shifting the mantissa of $\phi$ by a number of bits equal to the value of the exponent of $\phi$ (which is non-negative). Finally, this left-shift, and thus the $\cast_\R$ operation over $m$-precision floats, can be easily computed by an $\L$-uniform threshold circuit of size $\poly(m)$. In the other direction, casting from reals to $m$-precision floats can also be easily accomplished by an $\L$-uniform threshold circuit of size $\poly(m)$.

    Note that $2^q \in [0, n^c]$ and hence $m \in [n^c, 2n^c]$. In particular, $m \geq n$. Thus $z \leq n$ (a precision) implies $z \leq m$. Observing that $\tilde{\mathbf M}$ is a matrix of
    floats each representable as an $m$-bit integer,
    we now apply \cref{lem:D-matrix-power} with $\D$ being `float' to conclude that float matrix power $\tilde{\mathbf M}^z$ can be computed by an $\L$-uniform threshold circuit of size $\poly(m)$. Since
    $m \leq 2n^c$,
    this circuit is also of size $\poly(n)$.

    Finally, to compute $\mathbf M^z$, we first divide each entry of $\tilde{\mathbf M}^z$ by $2^{qz}$. This can be accomplished by subtracting $qz$ from the exponent of each entry of $\tilde{\mathbf M}$; again, we do this computation exactly, using up to $(c+1) \log n$ additional bits in the exponent if necessary. We then cast all entries of the resulting matrix back to $c \log n$ precision floats. All this can be done in $\L$-uniform $\TC^0$, finishing the proof that $\mathbf M^z$ can be computed in $\L$-uniform $\TC^0$.
\end{proof}

\subsection{$\L$-Uniformity of Polynomial Division in $\TC^0$}
\label{subsec:polynomial-division}

\citet{Hesse2002UniformCT} state that polynomial division is in $\L$-uniform $\TC^0$ in Corollary 6.5. For historical reasons, this claim is preceded by weaker claims in older papers. We briefly clarify this situation to help understand why the stronger claim is valid.

\citet{reif1992threshold} establish that polynomial division can be performed in $\P$-uniform $\TC^0$, whereas we state our results for $\L$-uniform $\TC^0$, which is a smaller class. However, the only issue preventing the polynomial division result from originally going through in the $\L$-uniform case is that, at the time of \citeauthor{reif1992threshold}'s publication, it was not known whether integer division and iterated integer multiplication are computable in $\L$-uniform $\TC^0$. However, \citet{hesse2001division} later proved exactly this. Combining the two results, Theorem 3.2 of \citet{reif1992threshold} goes through even with $\L$-uniformity (not just $\P$-uniformity). Its Corollary 3.3 then allows us to conclude that integer polynomial division can be solved by $\L$-uniform $\TC^0$ circuits because the output of integer polynomial division is an analytic function whose Taylor expansion has a finite number of terms \citep{reif1992threshold}.



\section{S6 Parameterization} \label{app:s6}

To justify that the S6 architecture used by Mamba is computable in $\TC^0$, we justify that $\barA_i, \barB_i, \mathbf C_i, \mathbf D_i$ can be computed as a function of $\mathbf x_i$ in $\TC^0$.

We begin by summarizing how exactly is S6 parameterized. S6 first defines continuous-time parameters:
\begin{compactenum}
    \item $\mathbf A$ is a fixed, diagonal matrix that is invertible (each $a_{ii} \neq 0$);
    \item $\mathbf B_i = \pi_{\mathbf B}(\mathbf x_i)$ is computed via a projection;
    \item $\mathbf C_i = \pi_{\mathbf C}(\mathbf x_i)$ is computed via a projection;
    \item $\mathbf D_i = \mathbf I$ .
\end{compactenum}
Next, we need to discretize the matrices $\mathbf A$ and $\mathbf B$.
S6 does this using an input-dependent discretization factor $\delta_i$:
\begin{equation*}
    \delta_i = \mathrm{softplus}(\delta + \pi_\delta(\mathbf x_i)) .
\end{equation*}
The discretized matrices are then defined as:
\begin{align*}
    \barA_i &= \exp(\delta_i \mathbf A) \\
    \barB_i &= (\delta_i \mathbf A)^{-1} \left( \barA_i - \mathbf I \right) \delta_i \mathbf B_i .
\end{align*}

It is clear to see that the diagonalizability condition of \Cref{thm:diagonal} is satisfied because $\barA_i$ itself is diagonal. Additionally, all the relevant matrices can be computed in $\TC^0$.

\begin{proposition}
    $\barA_i, \barB_i, \mathbf C_i,$ and $\mathbf D_i$ can all be computed as functions of $\mathbf x_i$ in $\L$-uniform $\TC^0$.
\end{proposition}

To prove this, observe that
$\mathbf A, \mathbf B_i, \mathbf C_i, \mathbf D_i$ can all be computed in $\L$-uniform $\TC^0$ because they are either constants or linear transformations of $\mathbf x_i$. To justify that $\barA_i$ and $\barB_i$ can be computed in $\L$-uniform $\TC^0$, we just need to justify that we can invert diagonal matrices and compute $\mathrm{softplus}$ and $\exp$ in $\L$-uniform $\TC^0$.

\begin{lemma} \label{lem:invert_diagonal}
    Diagonal matrices over log-precision floats can be inverted in $\L$-uniform $\TC^0$.
\end{lemma}

\begin{proof}
    Inverting a diagonal matrix just involves forming the reciprocal along the diagonal. Scalar reciprocals can be approximated to error at most $2^{-n^c}$ (for any $c$) in $\TC^0$ \citep{Hesse2002UniformCT}. This means we can compute the reciprocal of a log-precision float (cf. \Cref{app:arithmetic}) exactly up to log precision.
\end{proof}

In \Cref{sec:nonlinearities}, we show that we can compute the nonlinearities $\exp$ and $\softplus$ over a bounded domain in $\TC^0$.

\section{Diagonalizable SSMs} \label{app:diagonalizable-s6}

We extend \cref{thm:diagonal} to cover the case when the SSMs transition matrices are \emph{simultaneously diagonalizable}, rather than just diagonal. This requires us to note that when working with log-precision floating point representations of matrices, a diagonal matrix $\mathbf{A}$ and its diagonalized decomposition $\mathbf{W}\diag(\mathbf{a})\mathbf{W}^{-1}$ are numerically substitutable.

\diagonalizabletheorem*

\begin{proof}
    When the first condition is satisfied, the following equality holds over log-precision floats:
    \begin{equation*}
        \prod_i \barA_i = \prod_i \left( \mathbf{W}\diag(\bar{\mathbf{a}}_i)\mathbf{W}^{-1} \right) .
    \end{equation*}
    By the associativity of $\mathbb D$-matrix products, we can remove the parentheses to get
    \begin{align*}
        \prod_i \barA_i
        &= \prod_i\mathbf{W}\diag(\bar{\mathbf{a}}_i)\mathbf{W}^{-1} \\
        &= \mathbf{W} \left[\prod_i \diag(\bar{\mathbf{a}}_i)\right]\mathbf{W}^{-1}.
    \end{align*}
    Iterated multiplication of diagonal matrices is reducible to several iterated scalar multiplications, which is in $\L$-uniform $\TC^0$ (\Cref{cor:iterated-float-product}). 
    Then the product of all $\barA_i$ is the product of three $\L$-uniform $\TC^0$-computable matrices, so is itself $\L$-uniform $\TC^0$-computable.
    The second condition from \Cref{lem:general} is satisfied by assumption. Thus, the convolutional form for $M$ can be computed in $\L$-uniform $\TC^0$.
\end{proof}

\subsection{Diagonalizable S6}

We can define an extension of S6 which satisfies these conditions to show that it is also in $\L$-uniform $\TC^0$.

\begin{definition}
    Diagonalizable S6 has continuous-time parameters:
    \begin{compactenum}
        \item $\mathbf{A}$ is a fixed matrix diagonalizable as $\mathbf{W}\diag(\mathbf{a})\mathbf{W}^{-1}$ that is invertible (each $a_{ii} \neq 0$);
        \item $\mathbf{B}_i = \pi_\mathbf{B}(\mathbf{x}_i)$ is computed via a projection;
        \item $\mathbf{C}_i = \pi_\mathbf{C}(\mathbf{x}_i)$ is computed via a projection;
        \item $\mathbf{D} = \mathbf{I}$.
    \end{compactenum}
    As in the standard S6, the discretization of $\mathbf{A}$ and $\mathbf{B}$ is done by an input-dependent discretization factor $\delta_i$:
    $$
        \delta_i = \mathrm{softplus}(\delta + \pi_\delta(\mathbf{x}_i)).
    $$
    The discretized matrices are then defined as
    \begin{align*}
        \barA_i &= \exp(\delta_i \mathbf{A}), \\
        \barB_i &= (\delta_i\mathbf{A})^{-1}(\barA_i - \mathbf{I})\delta_i\mathbf{B}_i.
    \end{align*}
\end{definition}

To prove that $\barA_i$ and $\barB_i$ have the necessary properties, we first introduce some lemmas dealing with matrix-valued functions of diagonalizable matrices.

\begin{lemma}
    If a matrix $\mathbf{A}$ is diagonalizable, then we can substitute its diagonalized decomposition $\mathbf{W}\diag(\mathbf{a})\mathbf{W}^{-1}$ in a computation graph over log-precision floats involving $\mathbf{A}$ without incurring any meaningful error.
\end{lemma}
\begin{proof}
    Let $\mathbf{A}$ be diagonalizable. Then there exists invertible $\mathbf{W}$ and diagonal $\diag(\mathbf{a})$ such that $\mathbf{A} = \mathbf{W}\diag(\mathbf{a})\mathbf{W}^{-1}$. Note that the product of a fixed number of matrices is in $\L$-uniform $\TC^0$, and so the first $c \log n$ bits of $\mathbf{A}$ and $\mathbf{W}\diag(\mathbf{a})\mathbf{W}^{-1}$ are identical.
\end{proof}

\begin{lemma} \label{lem:factor_scalar}
    Let $\mathbf{A}$ be diagonalizable as $\mathbf{W}\diag(\mathbf{a})\mathbf{W}^{-1}$, where $\mathbf{a} \in \mathbb{R}^d$.
    Then $c\cdot\mathbf{A}$ is simultaneously diagonalizable with $\mathbf{A}$ via $c \cdot \mathbf{A} = \mathbf{W}c\cdot\diag(\mathbf{a})\mathbf{W}^{-1}$.
\end{lemma}
\begin{proof}
    Scalar multiplication commutes around matrix multiplication.
\end{proof}

\begin{lemma} \label{lem:factor_exp}
    Let $\mathbf{A}$ be diagonalizable as $\mathbf{W}\diag(\mathbf{a})\mathbf{W}^{-1}$, where $\mathbf{a} \in \mathbb{R}^d$. Then $\exp(\mathbf{A}) = \mathbf{W} \exp(\diag(\mathbf{a})) \mathbf{W}^{-1}$.
\end{lemma}

\begin{proof}
    The matrix exponential is defined as a power series, so for diagonalizable $\mathbf{A}$ it follows that
    \begin{align*}
        \exp(\mathbf{A}) &= \exp(\mathbf{W}\diag(\mathbf{a})\mathbf{W}^{-1}) \\
        &= \sum_{k=0}^\infty \frac{1}{k!} (\mathbf{W}\diag(\mathbf{a})\mathbf{W}^{-1})^k \\
        &= \sum_{k=0}^\infty \frac{1}{k!} \mathbf{W}\diag(\mathbf{a})^k\mathbf{W}^{-1} \\
        &= \mathbf{W}\left(\sum_{k=0}^\infty \frac{1}{k!}\diag(\mathbf{a})^k\right)\mathbf{W}^{-1} \\
        &= \mathbf{W}\exp(\diag(\mathbf{a}))\mathbf{W}^{-1}. \tag*{\qedhere}
    \end{align*}
\end{proof}

The expressions in \Cref{lem:factor_exp} are equivalent not just over real numbers but also over log-precision floats. This is because we know both expressions can be approximated in $\TC^0$ with error at most $2^{-n^c}$, which means the $c \log n$ bits of the approximation must be equivalent.

\begin{lemma} \label{lem:invert_diagonalizable}
Diagonalizable matrices over log-precision floats can be inverted in $\L$-uniform $\TC^0$.
\end{lemma}

\begin{proof}
    Let $\mathbf{A} = \mathbf{W}\diag(\mathbf{a})\mathbf{W^{-1}}$. Then
    $\mathbf{A}^{-1} = \mathbf{W}^{-1}\diag(\mathbf{a})^{-1}\mathbf{W}$. We are guaranteed that each of these matrices exists, and furthermore by \Cref{lem:invert_diagonal} we know that $\diag(\mathbf{a})^{-1}$ is computable in $\L$-uniform $\TC^0$. Their product, involving a finite number of additions and multiplies, is also computable in $\L$-uniform $\TC^0$.
\end{proof}

\begin{proposition}
    $\barA_i$ and $\barB_i$ can be computed as functions of $\mathbf{x}_i$ in $\L$-uniform $\TC^0$.
\end{proposition}

\begin{proof}
    We first show that $\barA_i$ is $\L$-uniform $\TC^0$ computable. By definition,
    $$
        \barA_i = \exp(\delta_i \mathbf{A}).
    $$
    By \Cref{cor:exp-log-softplus}, $\delta_i$ is computable in $\L$-uniform $\TC^0$.
    The product $\delta_i\mathbf{A}$ is simultaneously diagonalizable with $\mathbf{A}$ so
    \begin{align*}
        \barA_i &=\exp(\mathbf{W}\delta_i \diag(\mathbf{a})\mathbf{W}^{-1}) \tag{\Cref{lem:factor_scalar}} \\
        &= \mathbf{W}\exp(\diag(\mathbf{a}))\mathbf{W}^{-1}. \tag{\Cref{lem:factor_exp}}
    \end{align*}
    Since the exponential of scalars is $\L$-uniform $\TC^0$ computable by \Cref{cor:exp-log-softplus}, then $\barA_i$ is as well.

    Turning to $\barB_i$, note that the term $(\delta_i\mathbf{A})^{-1}$ is $\L$-uniform $\TC^0$ computable by \Cref{lem:invert_diagonalizable} since $\delta_i\mathbf{A}$ is diagonalizable. Since $\barA_i$ is $\L$-uniform $\TC^0$ computable, the difference $\barA_i - \mathbf{I}$ is as well. Then every term in 
    $$
    \barB_i = (\delta_i\mathbf{A})^{-1}(\barA_i - \mathbf{I})\delta_i\mathbf{B}_i
    $$
    is $\L$-uniform $\TC^0$ computable, and so their product is as well.
\end{proof}

\begin{remark}
    Since $\mathbf{C}_i$ and $\mathbf{D}_i$ are unchanged between the standard and diagonalizable versions of S6, the proofs of their computability as functions of $\mathbf{x}_i$ in $\L$-uniform $\TC^0$ pass through from \Cref{app:s6}.
\end{remark}

\begin{corollary}\label{cor:diagonalizable-s6}
    There exists an $\L$-uniform $\TC^0$ circuit family that computes Diagonalizable S6's convolutional form.
\end{corollary}

\begin{proof}
    Note that since $\mathbf{A} = \mathbf{W}\diag(\mathbf{a})\mathbf{W}^{-1}$ is fixed the set of transition matrices $\{\barA_i\}$ is simultaneously diagonalizable via $\mathbf{W}$ for all $i$.
    
    Then Diagonalizable S6 meets the conditions for \Cref{thm:diagonalizable}.
\end{proof}

\section{Nonlinearities in $\L$-Uniform $\TC^0$} \label{sec:nonlinearities}.

The parameterization of SSMs (and transformers) involves computing nonlinearities like $\exp$ and $\softplus$. We leverage existing circuit complexity results \citep{reif1992threshold} to show that, in general, any well-behaved nonlinearity should be computable in $\L$-uniform $\TC^0$ when used in conjunction with pre- or post-layer norm.

\begin{lemma}[Adapts Corollary 3.3, \citealp{reif1992threshold}] \label{lem:nonlinearity}
    Let $X = (-B, B)$ be a bounded interval.
    Let $f$ be a function over $X$ with a convergent Taylor series:
    \begin{equation*}
        f(x) = \sum_{n=0}^\infty \frac{a_n}{b_n} (x - x_0)^n ,
    \end{equation*}
    where $a_n, b_n$ are integers with magnitude at most $2^{n^{O(1)}}$ computable in $L$-uniform $\TC^0$.
    Then $f$ can be approximated over $X$ by $\L$-uniform $\TC^0$ circuits to log precision (error at most $2^{-n^c}$ for any $c \geq 1$).
\end{lemma}

\begin{proof}
    \citet{reif1992threshold} give a proof when $X = (-1, 1)$. We generalize to $X = (-B, B)$, assuming w.l.o.g. $B = 2^k$. The idea is to transform $f$ to have domain $(-1, 1)$ via
    \begin{equation*}
        g(x) = f(Bx) .
    \end{equation*}
    Then, we can apply Corollary 3.3 of \citet{reif1992threshold} to approximate $g$ with error at most $2^{-n^c}$. \citet{reif1992threshold} state their result for $\P$-uniform $\TC^0$, but through advances in circuit complexity since the time of publication~(\Cref{subsec:polynomial-division}), their construction naturally applies for $\L$-uniform $\TC^0$ as well.
    
    To approximate $f$, compute $z = x / B$, which can be done exactly since $B = 2^k$. We conclude by computing $g(z) = f(x)$, which, as established, has error at most $2^{-n^c}$.
\end{proof}


Because of pre- and post-norm layers, the elements of $\mathbf x_i$ in an SSM will remain in a bounded domain $(-B, B)$. Thus, the following lemma shows we can compute them:

\begin{corollary} \label{cor:exp-log-softplus}
    The pointwise nonlinearities $\exp$, $\log$, and $\softplus$ are computable over $(-B, B)$ in $\L$-uniform $\TC^0$.
\end{corollary}

\begin{proof}
    By \citet[Corollary 3.3]{reif1992threshold} know that the Taylor series for $\exp$ and $\log$ is convergent with $a_n,b_n$ computable in $\L$-uniform $\TC^0$. Then $\exp$ and $\log$ are themselves computable in $\L$-uniform $\TC^0$.

    Since $\softplus(x) = \log \left( 1 + \exp(x) \right)$ is a fixed composition of $L$-uniform $\TC^0$-computable functions, it too is computable in $L$-uniform $\TC^0$.
\end{proof}

\end{document}